\documentclass{article} % For LaTeX2e
\usepackage{iclr2026_conference,times}

\usepackage{float}
\usepackage{wrapfig}
\usepackage{amsmath,amsfonts,bm}

\def\eqref#1{equation~\ref{#1}}
\def\1{\bm{1}}

\DeclareMathAlphabet{\mathsfit}{\encodingdefault}{\sfdefault}{m}{sl}
\SetMathAlphabet{\mathsfit}{bold}{\encodingdefault}{\sfdefault}{bx}{n}

\usepackage[colorlinks=true,linkcolor=red!60!black,citecolor=green!45!black,urlcolor=blue!60!black]{hyperref}
\usepackage{url}
\usepackage{graphicx}
\usepackage{booktabs}
\usepackage{longtable}
\usepackage{amsmath,amssymb,amsthm}
\usepackage{xcolor}
\newtheorem{proposition}{Proposition}

\newcommand{\todo}[1]{}
\newcommand{\pending}[1]{}

\newcommand{\rpred}{r_{\mathrm{pred}}}

\title{The Dark Room in the Reward Channel: Dense Prediction Rewards Collapse
GRPO-Trained LLM Agents, and the Channel, Not the Content, Decides What Works}

\author{Yu Wang\thanks{Independent researcher. Correspondence: \texttt{wangyu099508@gmail.com}.
ORCID: \href{https://orcid.org/0009-0009-6231-2484}{0009-0009-6231-2484}.
Preprint: \href{https://arxiv.org/abs/2607.21273}{arXiv:2607.21273}; an earlier version is
deposited as \href{https://doi.org/10.5281/zenodo.21505228}{doi:10.5281/zenodo.21505228}.}}

\newcommand{\fix}[1]{}
\iclrfinalcopy % arXiv preprint: non-anonymous

\begin{document}

\maketitle
% arXiv header: override the ICLR style banner (set after maketitle, which resets it)
\lhead{Preprint. Under review.}
\thispagestyle{fancy}

\begin{abstract}
% Abstract: problem -> related work/gap -> approach -> numbered results,
% 250-300 words per venue norm. Every retained qualifier is audit-hardened
% (25 external rounds): the mitigation parenthetical, the floor-bound
% exemption, seed-pairing labels, and the regime clause are load-bearing --
% do not drop or widen when editing. Supporting claims cut for length
% (slow-anneal arm, compute control, WebShop split, six-run hedge) live in
% S1/S4 with their full scoping.
Sparse success rewards make long-horizon LLM agents slow to train; the
common remedy is dense per-step supervision: reward the policy for predicting
its next observation, which looks provably safe under
potential-based shaping. Published prediction-reward and
auxiliary-loss variants report successes and instabilities; no
controlled account establishes when the reward channel is dangerous or where
auxiliary gains come from. We give that account: 74 preregistered
arms dissect one fixed prediction signal under group-normalized RL (GRPO)
across ALFWorld, WebShop, a synthetic POMDP, and Qwen3-1.7B/4B/8B,
varying only the delivery mechanism.
(1)~Every run sustaining this difference-form reward under untouched std
normalization (no filtering, dynamic-sampling, or decoupling mitigations)
collapses: eleven runs across scales, coefficients, group sizes, and
groupings (the floor-bound synthetic environment stalls instead); ALFWorld runs end in a degenerate absorbing state (prediction accuracy
${\to}1.0$, success ${\to}0$): the optimizer builds the ``dark room''. One
line of algebra explains it: inside all-fail groups, z-scoring
cancels the shaping coefficient; removing only std normalization restores
baseline parity.
(2)~A signal's danger is set by its within-group \emph{variance
trajectory}, with hackability as a second axis; it retrodicts every
reward-channel collapse and survives preregistered prospective tests.
(3)~The same signal as an auxiliary teacher-forced loss does no harm on
ALFWorld at 4B, but the gain is not the signal's: content-free placebos as a
class match or beat gold at both matched seeds (seed 0: placebo mean 78.8 vs.\ 68.6;
seed 42: single endpoint 67.9 vs.\ 57.9); the auxiliary
update is the regularizer.
(4)~At 8B the recipe turns bistable: gold full-weight locks two of three seeds;
every content-free or reduced-weight arm stays healthy.
No ALFWorld or WebShop reward-channel variant measurably
beats its matched-normalization baseline and no gold signal measurably
outperforms its content-free placebo: the delivery channel, not the content,
decides; which channel is safe is regime-dependent.

\end{abstract}

\section{Introduction}
\label{sec:intro}

\begin{figure}[t]
  \centering
  \includegraphics[width=\linewidth]{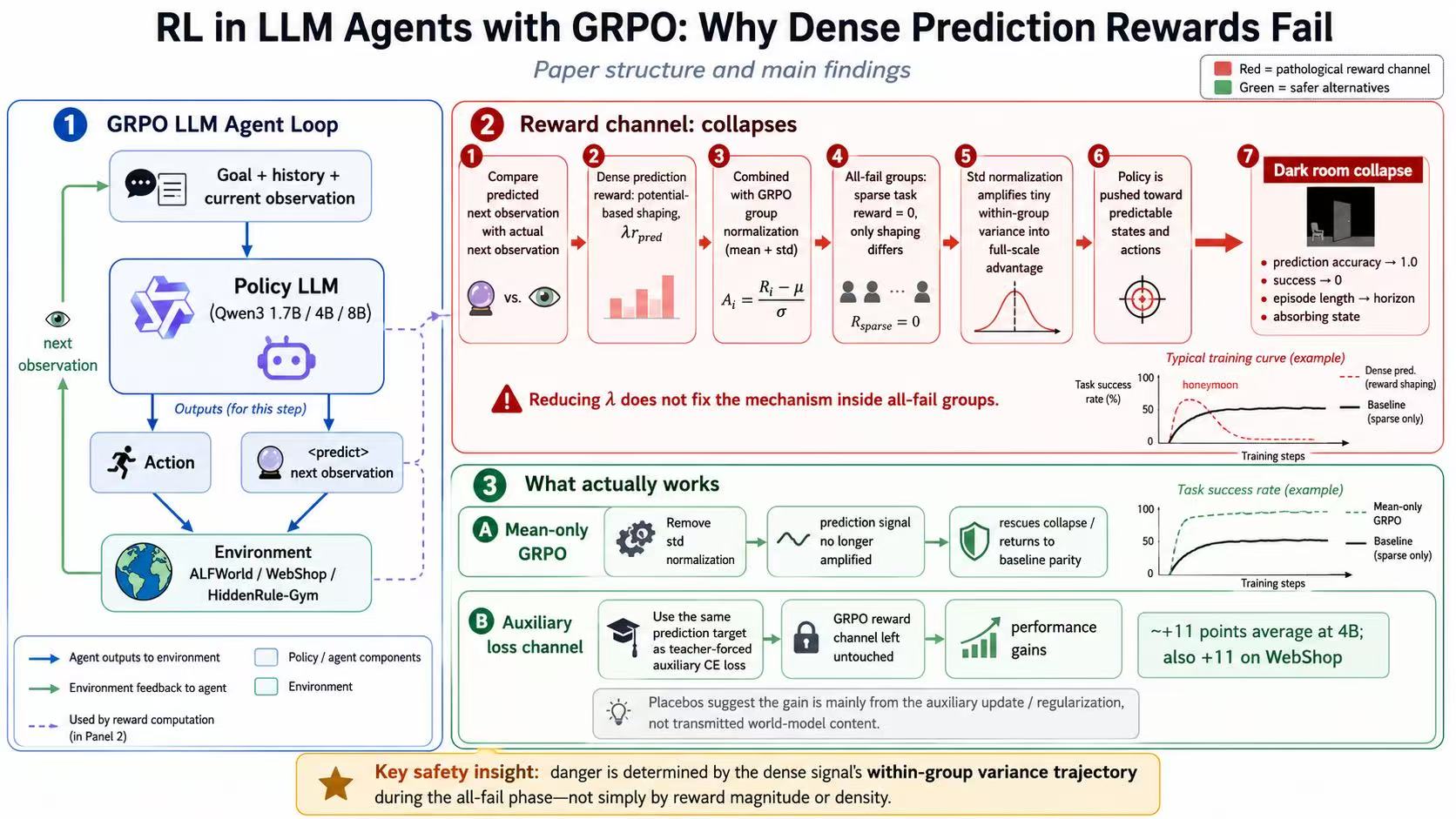}
  \caption{Overview of the study. \emph{Panel 1}: the step-independent GRPO
  agent loop with a verifiable \texttt{<predict>} block scored against the next
  observation (\S\ref{sec:setting}). \emph{Panel 2}: the failure cascade in the
  reward channel: in all-fail groups the $\varepsilon$-scale shaping term is
  the only within-group variation, std normalization amplifies it to
  full-scale advantages, and the policy converges to a dark-room absorbing
  state; reducing $\lambda$ does not change the update
  (\S\ref{sec:props}, \S\ref{sec:failure}). \emph{Panel 3}: the two
  deliveries that survive our controls: mean-only normalization (rescue to
  baseline parity) and the auxiliary-CE loss channel, whose gain the placebo
  ladder attributes to the update itself rather than the world-model content
  (\S\ref{sec:channel}). Annotated gains are rounded summaries: the 4B
  auxiliary gain measures $+14.2$ training-caliber / $+15.4$ formal on
  three-seed means (\S\ref{sec:e01}); the WebShop $+11.0$ is training-caliber
  and seed-split (formal $+3.0$, statistically level;
  \S\ref{sec:generalization}); and the loss channel's safety is
  scale-qualified by the 8B bistability (\S\ref{sec:generalization}). The
  boxed insight's ``all-fail phase'' is the sparse-success route into the
  variance-dominated window; a continuous-score second route exists
  (\S\ref{sec:variance-profile}).}
  \label{fig:overview}
\end{figure}

% P1 (hook): task + the surprising failure, one message.
Long-horizon LLM agents trained with sparse success rewards learn slowly. A natural
remedy is dense per-step supervision: ask the policy to predict its next observation
and reward it for being right. Some form of this recipe is on the menu wherever
agent RL meets sparse rewards, which today means most GRPO-family agent
training. We show the remedy backfires catastrophically under
group-normalized RL \citep{shao2024grpo}; Figure~\ref{fig:overview} maps the
setup, the failure cascade, and the two deliveries that survive our controls. The \emph{dark room problem}
\citep{friston2012darkroom} asks why a surprise-minimizing agent does not retreat to
a fully predictable corner; long a philosophical puzzle for predictive-processing
theories \citep{baltieri2019darkroom,seth2020curious}, it is a concrete engineering
outcome in our experiments: at every scale, group size, and fixed shaping
coefficient we tested, the optimizer builds the dark room automatically (timed
withdrawal inside the early gain window is the one schedule that escapes,
\S\ref{sec:satrace}). Curiosity-driven agents chase prediction \emph{error} and get
trapped by noisy TVs \citep{burda2019curiosity}; ours chase prediction
\emph{accuracy} and get trapped by their own predictability: the same failure
reached from opposite directions.
% P2 (challenge): why the failure is puzzling given the safety design.
The failure is puzzling because the signal meets every safety standard the shaping
literature asks for. In our step-independent multi-turn setting
\citep{feng2025gigpo}, the policy emits a verifiable \texttt{<predict>} block each
step, rule-scored against the actual next observation with no judge model; the
score enters as a difference-form per-step reward, superficially shaped like
potential-based shaping (PBRS), whose episode sum telescopes so return-level
distortion is provably at most $\lambda$. Yet every run that sustains the
difference-form reward under untouched std normalization in our sweep collapses.
On ALFWorld the natural history is stereotyped:
at 4B and 8B an early gain phase (a ``honeymoon,'' absent at 1.7B) whose peaks rise
with scale though never above the arm's formal baseline, then a turn, then an
absorbing state from which no intervention we tested recovers; on WebShop the
honeymoon is replaced by instant suppression or baseline parity
(\S\ref{sec:generalization}).
% P3 (root cause): the mechanism.
The root cause lives one level below returns, in the advantage estimator. Sparse
success makes all-fail groups common, and inside such a group the only within-group
variation is the $\varepsilon$-scale shaping term. GRPO's z-scoring divides by the
group's own standard deviation, so the normalized advantage is \emph{invariant to
the shaping coefficient} (Proposition~\ref{prop:lambda}): bounded rewards become
full-scale pressure, and, while the shaping term dominates the pool's residual spread
($\lambda\sigma_s \gg \max(\sigma_p, \epsilon)$, \S\ref{sec:props}), no reduction of the
coefficient short of zero changes the update. A three-point $\lambda$-sweep
confirms this (amplified advantage scale $\pm 7.3/\pm 9.1/\pm 7.2$ across a
tenfold range; every arm collapses), and a single-factor ablation completes the
causal case: removing only the std normalizer turns 0\% into baseline parity, and
a knock-out matrix (remove the signal, the amplifier, or the all-fail groups)
shows each ingredient is necessary.
% P4 (generalizing insight): the criterion.
The mechanism generalizes into the paper's organizing principle, a
\emph{variance-trajectory criterion} (endpoint case formalized in
Proposition~\ref{prop:variance}): what
z-scoring amplifies is a dense signal's within-group variance, so a signal's
danger is set by whether that variance survives the phase when it dominates the
group advantage (all-fail domination is the sparse-success route into that
phase; a continuous-score easy regime is a second route), not by magnitude or
density. The criterion retrodicts every
reward-channel collapse in this paper and every null but one (anchor-QA, where
it over-warns: variance persisted yet the formal endpoint is level with
baseline, \S\ref{sec:variance-profile}), makes preregistered
prospective predictions (full SHA256 digests committed to a public
version-controlled file before the runs completed, shipped with the code
release), and is compatible with published reward-channel successes. An
early-warning rule built on it detects all three held-out reward-channel
collapses 13--73 steps in advance, with at most one false alarm among the eleven
training-healthy arms (a set distinct from the abstract's eleven collapsing
runs; the flagged arm was itself later revised below baseline by the formal
evaluation, blurring even that alarm's status).

% P5 (the positive result and its attribution).
The criterion says when the reward channel is dangerous; a controlled
signal-delivery matrix says what to do instead, and why the popular explanation of
the fix is wrong. Holding signal, model, environment, and budget fixed and varying
only the consumption mechanism, every reward-channel variant on ALFWorld and
WebShop is at best neutral. The same signal as an auxiliary teacher-forced loss
does no harm, and what gain appears is not the signal's: under the identical
recipe a placebo ladder (shuffled gold, random-vocabulary,
random-token) matches or beats gold at both matched seeds (the placebo mean 78.8
vs.\ gold 68.6 at seed 0; the single s42 placebo endpoint 67.9 vs.\ 57.9), a
compute-matched no-target control shows no gain (48.6\%), and the
random-vocabulary placebo tops the entire table; a cross-normalization
convergence reinforces the reading descriptively (fixing the normalizer alone,
with no signal at all, reaches the same level, 57.9\% formal). Each placebo
family carries its own three-seed replication (at training caliber five of six
seeds above the
baseline band, the sixth inside it; none below, none collapsing), so the
class-level ordering does not rest on a single run. What works is the auxiliary
CE pass itself, acting as a regularizer; the world-model content plays no
measurable role, which bears awkwardly on the auxiliary-world-model literature,
whose gains may partly rest on the same effect.

% P5b (boundary): generalization and scale.
Both findings generalize within boundaries we map in \S\ref{sec:generalization}.
On WebShop the reward channel fails again, instantly in the easy regime and
through the familiar collapse-after-parity shape in the hard regime, while the
auxiliary channel is seed-split in the easy regime (single-seed-pair
$+11.0$/$-9.9$; formal $+3.0$ statistically level and $-14.0$; the hard-regime
cell is untested). At 8B the auxiliary recipe itself becomes \emph{bistable}:
gold full-weight training locks two of three seeds into an absorbing state and
sends the third to the study's best gold-signal result, while every content-free
or reduced-weight arm stays healthy. The randomness has moved from collapse
timing to collapse occurrence, and content-inertness, intact on the gain side,
reverses on the risk side.

% P6 (positioning + contributions).
We are not proposing that world-model objectives be added to agents; we report
what happens when they are, so the recipe under study is not itself a
contribution. Our contributions are:
\begin{enumerate}
  \item \textbf{Mechanism and causal localization of a catastrophic failure.}
  Three-scale collapse phenomenology; a one-line invariance result
  (Proposition~\ref{prop:lambda}) with three-point confirmation; a single-factor
  rescue and knock-out matrix; a behavioral characterization of the absorbing
  state; irreversibility across scheduling, coefficient scale, and post-hoc
  structural interventions (\S\ref{sec:analysis}, \S\ref{sec:failure}).
  \item \textbf{A signal-side safety criterion with prospective validation.} The
  variance-trajectory criterion (endpoint case: Proposition~\ref{prop:variance}), its
  retrodictions (complete on collapses; one over-warning on nulls), preregistered prospective tests, a validated early-warning
  signature, and a compatibility check against published successes
  (\S\ref{sec:variance-profile}, \S\ref{sec:failure}).
  \item \textbf{The channel effect, fully attributed.} A fourteen-arm delivery
  matrix (the same signal across every signal-bearing arm, plus matched
  controls) and placebo ladder attributing the auxiliary-loss gain to the
  update itself, with seed replication and a controlled challenge to
  world-modeling attributions in prior work (\S\ref{sec:channel}).
  \item \textbf{Scale and environment boundaries of both results.}
  Cross-environment replication of the failure and the content-free attribution
  on WebShop (where the auxiliary gain is seed-split), a regime-dependent
  failure-form taxonomy (instant suppression vs.\ collapse-after-parity), and
  the 8B bistability: same recipe, seed-determined locked collapse or the
  study's best gold-signal gain (\S\ref{sec:generalization}).
\end{enumerate}

\section{Related Work}
\label{sec:related}

\textbf{Group-normalized RL and its pathologies.} GRPO
\citep{shao2024grpo,guo2025deepseekr1} normalizes group returns by mean and
std; Dr.GRPO \citep{liu2025drgrpo} identified the std term's biases, and we
adopt its mean-only variant as an \emph{instrument}, contributing the causal
demonstration that in long-horizon sparse-success agents the std term alone
separates catastrophic from benign. Our results delimit where the
std-as-adaptive-gradient defense \citep{ge2026curvature} breaks: phases where
an $\varepsilon$-scale shaping term supplies the dominant within-group
variance, with sparse-success all-fail groups as the extreme case. A distinct
line treats degenerate groups as zero-signal dead zones to prune or reweight
\citep{xu2025spo,zhang2026aero}; our setting inverts that premise (all-fail
groups become \emph{full-signal} zones, Proposition~\ref{prop:lambda}), and
remedies that merely skip them remove the amplifier's fuel without the
amplifier, rescue only partially ($-9.9$pt, \S\ref{sec:rescue}), and can mask
the mechanism. Proposition~\ref{prop:lambda} is, to our knowledge, the first
formalization of this amplification for the sparse-success all-fail structure
of multi-turn agents.

\textbf{Dense signals and shaping.} Prediction-\emph{error} seeking falls to
the noisy-TV trap \citep{pathak2017curiosity,burda2019curiosity}; our failure
is the mirror image, prediction-\emph{accuracy} seeking parking in predictable
corners \citep{friston2012darkroom}. Ng-style invariance \citep{ng1999policy}
assumes a policy-independent potential; the known ways it breaks
\citep{devlin2012dynamic,behboudian2021pies} never consider group
normalization, where our damage concentrates. The standard hacking mitigation
is the progress principle \citep{setlur2024pav,hou2025lpm}; our $\Delta$acc
arm is its preregistered test, and its failure sharpens the boundedness
principle of \citet{fu2025par}: return-level boundedness does not survive
group z-scoring at the step level.

\textbf{Auxiliary objectives and world modeling.} Loss-channel world-model
signals have a consistent track record \citep{jaderberg2017unreal,
shrivastava2026echo,lu2026paw}; the reward channel is not uniformly hostile
(RWML \citep{yu2026rwml}, VAGEN \citep{wang2025vagen}), and our criterion
must be, and is, consistent with those successes (\S\ref{sec:variance-profile}).
What this literature lacks is a controlled comparison holding the signal fixed
while varying only the consumption mechanism; our matrix supplies it
(\S\ref{sec:channel}), and relative to ECHO \citep{shrivastava2026echo} we
contribute the attribution. Our 8B bistability is not generic seed variance
\citep{dodge2020finetuning} rebranded (\S\ref{sec:generalization}). Extended
related work, including grouping-scheme successors, qualitative precedents of
the amplification, and scale-instability context, is in
Appendix~\ref{app:related-ext}.

\section{Method: Signal, Amplification, and the Variance Criterion}
\label{sec:analysis}

Three modules organize the paper's apparatus: the verifiable prediction signal
and how it enters the reward (\S\ref{sec:setting}); the amplification algebra
that turns a bounded signal into full-scale pressure
(\S\ref{sec:props}); and the variance-trajectory criterion that generalizes it
(\S\ref{sec:variance-profile}).

\subsection{Setting: verifiable prediction as a shaping reward}
\label{sec:setting}

\paragraph{Agent framework.}
Step-independent multi-turn rollout \citep{feng2025gigpo}: each step's LLM input is built
from the current observation plus a bounded history summary, keeping context length
near-constant over 15--50 step horizons (per-environment, Appendix~\ref{app:config}). Training uses GRPO \citep{shao2024grpo} with
grouped environments (group size 4) and terminal task reward (ALFWorld:
sparse success 10 / failure 0; WebShop replaces the binary outcome with a
continuous terminal matching score in $[0,1]$, the ``continuous scores'' of
\S\ref{sec:variance-profile}), plus an invalid-action penalty.

\paragraph{Prediction-sufficiency (PS) signal.}
Prompts request a \texttt{<predict>} block over a task-agnostic feature schema $\Phi$
(location, objects-visible boolean, receptacle state; an open-set visible-objects F1 is
logged but never rewarded). The block is rule-parsed and verified against the \emph{next}
environment observation; no judge model is involved. The per-step reward is the difference form
$\rpred(t) = \Phi_t - \Phi_{t-1}$ scaled by $\lambda{=}0.1$, injected on the last response
token of each step sample.

\subsection{Bounded returns, unbounded advantages}
\label{sec:props}

Let a group contain $G$ trajectories with returns $R_i$; GRPO's normalized advantage and
our per-step shaping reward are
\begin{equation}
\hat{A}_i \;=\; \frac{R_i - \bar{R}}{\sigma_R + \epsilon},
\qquad
\rpred(t) \;=\; \lambda\,\bigl(\Phi_t - \Phi_{t-1}\bigr),
\label{eq:defs}
\end{equation}
where $\Phi_t \in [0,1]$ scores the policy's own prediction against the realized next
observation. Because the shaping telescopes,
\begin{equation}
\sum_t \rpred(t) \;=\; \lambda\,\bigl(\Phi_T - \Phi_0\bigr), \qquad
\Bigl|\sum_t \rpred(t)\Bigr| \;\le\; \lambda,
\label{eq:telescope}
\end{equation}
episode returns are distorted by at most $\lambda$: at the \emph{return} level the
arithmetic bound delivers the safety that PBRS rhetoric suggests. The formal PBRS
invariance theorem \citep{ng1999policy} does not apply, however: $\Phi$ scores the
policy's \emph{own} predictions, a policy-dependent potential, exactly the class
for which invariance is known to break (\S\ref{sec:related}). The compliance is
superficial, and the failure enters one level down.

\paragraph{The normalization unit.}
One implementation fact is load-bearing for everything that follows. Each of a
trajectory's $T$ steps is a separate sample in the update batch, and the GRPO
estimator computes the group mean and std over \emph{all step-samples sharing a
group ID} (its cross-step default), not over the $G$ trajectory returns: the pool
has $n = G \cdot T \approx 4 \times 50 \le 200$ members, each scored by its own
per-step reward, and each sample's z-score is broadcast to its response tokens.
The propositions below use trajectory notation for readability, but the algebra
is unit-agnostic and the unit matters quantitatively: a z-score over $n$ pooled
values is bounded by $(n-1)/\sqrt{n}$, i.e.\ $1.5$ at $n{=}4$ but ${\approx}14$
at $n{=}200$. The advantage magnitudes we measure ($\pm 5$--$9$,
\S\ref{sec:failure}) are unreachable under trajectory pooling and sit inside the
step-sample bound; the fingerprint itself identifies the unit. The unit also
dissolves an apparent tension with Eq.~\ref{eq:telescope}: telescoping bounds the
return-level distortion, but the normalizer never sees that sum; it divides by
the pooled spread of the \emph{per-step} values $\lambda(\Phi_t - \Phi_{t-1})$,
which keeps fluctuating as long as the policy visits varied states. A shaping
term can therefore be return-bounded and advantage-dominant at once. One more $\varepsilon$-scale term shares the pool: the invalid-action
penalty ($0.1$, the same order as $\lambda$). Proposition~\ref{prop:lambda}
states both limits; the $\lambda$-sweep alone cannot apportion the pooled
spread, and the causal apportioning (every baseline arm carries the same
penalty under the same normalizer and the same early all-fail exposure and
never collapses; the terminal behavior is prediction-optimal, the imprint of
shaping pressure during the approach) assigns the operative pressure to the
shaping component. The full identifiability discussion, including the
implementable-but-unlogged spread monitor, is in
Appendix~\ref{app:criterion-detail}.

\begin{proposition}[$\lambda$-invariance in all-fail groups]
\label{prop:lambda}
In a group where every trajectory fails the task, write
$s_i = \sum_t (\Phi_t - \Phi_{t-1})_i$ for the accumulated shaping potential and
$p_i$ for the accumulated invalid-action penalty, so $R_i = C + \lambda s_i - p_i$
with $C$ the shared task return. If the shaping term dominates the within-group
spread ($\lambda\sigma_s \gg \max(\sigma_p, \epsilon)$),
\begin{equation}
\hat{A}_i \;=\; \frac{\lambda\,(s_i - \bar{s}) - (p_i - \bar{p})}{\sigma_{\lambda s - p} + \epsilon}
\;\longrightarrow\;
\frac{s_i - \bar{s}}{\sigma_s},
\label{eq:lambdainv}
\end{equation}
which is independent of $\lambda$. In the opposite limit
($\sigma_p \gg \max(\lambda\sigma_s, \epsilon)$) the advantage tends to
$-(p_i - \bar{p})/\sigma_p$, which contains no $\lambda$ either; only a balanced
mixture ($\lambda\sigma_s \sim \sigma_p$) admits a $\lambda$ trend.
\end{proposition}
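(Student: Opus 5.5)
The argument is a direct computation followed by a limit in each regime. First I substitute $R_i = C + \lambda s_i - p_i$ into the GRPO advantage of Eq.~\ref{eq:defs}. The shared task return $C$ cancels in $R_i - \bar{R}$, since every member of an all-fail group carries the same $C$. This gives the numerator $\lambda(s_i - \bar{s}) - (p_i - \bar{p})$. The group standard deviation is translation-invariant, so $\sigma_R = \sigma_{\lambda s - p}$, which yields the left-hand expression in Eq.~\ref{eq:lambdainv} exactly, with no approximation.

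Next I control the denominator. I expand $\sigma_{\lambda s - p}^2 = \lambda^2\sigma_s^2 - 2\lambda\,\mathrm{cov}(s,p) + \sigma_p^2$ and apply Cauchy--Schwarz, $|\mathrm{cov}(s,p)| \le \sigma_s\sigma_p$. Together these give the two-sided bound $|\lambda\sigma_s - \sigma_p| \le \sigma_{\lambda s - p} \le \lambda\sigma_s + \sigma_p$. Writing $\delta = \max(\sigma_p,\epsilon)/(\lambda\sigma_s)$, the dominance hypothesis means $\delta \to 0$. Under it the denominator equals $\lambda\sigma_s(1 + O(\delta))$.

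For the numerator I bound the penalty deviation per member: $|p_i - \bar{p}| \le \sqrt{n-1}\,\sigma_p$ for a pool of $n$ samples. Dividing numerator and denominator by $\lambda\sigma_s$, the expression becomes
\[
\hat{A}_i = \frac{(s_i - \bar{s})/\sigma_s + O(\sqrt{n}\,\delta)}{1 + O(\delta)}.
\]
This tends to $(s_i - \bar{s})/\sigma_s$. The $\lambda$ has cancelled between numerator and denominator, which is the claimed invariance. The opposite limit follows by the symmetric computation, dividing by $\sigma_p$ and using the same Cauchy--Schwarz sandwich. There $\lambda\sigma_s/\sigma_p \to 0$, and the advantage tends to $-(p_i-\bar{p})/\sigma_p$.

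For the balanced case $\lambda\sigma_s \sim \sigma_p$, I would show that the limit still depends on the ratio $\rho = \lambda\sigma_s/\sigma_p$. Concretely, $\hat{A}_i$ is a nondegenerate function of $\rho$, namely a normalized combination $(\rho z^s_i - z^p_i)/\sqrt{\rho^2 - 2\rho r + 1}$, where $z^s$ and $z^p$ are the standardized shaping and penalty scores and $r$ is their correlation. Hence $\lambda$ enters the advantage only in this regime.

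The main obstacle is making ``$\gg$'' precise. The neglected penalty term in the numerator is a per-member deviation, not a standard deviation, so its relative size can be as large as $\sqrt{n-1}\,\delta$ with $n$ up to $200$ under the step-sample pooling. I would therefore state the convergence as a quantitative bound, $|\hat{A}_i - (s_i-\bar{s})/\sigma_s| \le C\sqrt{n}\,\delta$ for small $\delta$, and read the limit as $\delta\sqrt{n} \to 0$. If that reading is too strong, I would fall back to the weaker root-mean-square version over the group: the vector of advantages converges in normalized $\ell_2$ at rate $O(\delta)$, with no $\sqrt{n}$ factor. The algebra is the same whether the members are trajectories or step-samples, so the argument covers the cross-step pooling unit described above.
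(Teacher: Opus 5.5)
Your proposal is correct and follows the paper's own route. The paper's proof is the single step ``substitute $R_i = C + \lambda s_i - p_i$ into Eq.~\ref{eq:defs} and take the respective limits,'' and your cancellation of $C$, use of translation invariance of the group standard deviation, and regime-by-regime limits are exactly that computation carried out. What you add beyond the paper (the sandwich $|\lambda\sigma_s - \sigma_p| \le \sigma_{\lambda s - p} \le \lambda\sigma_s + \sigma_p$, the Samuelson-type per-member bound, and the resulting pointwise error $O(\sqrt{n}\,\delta)$ versus $O(\delta)$ in root-mean-square) makes ``$\gg$'' quantitative where the paper leaves it implicit, and it is worth keeping: at the paper's step-sample pool of $n \approx 200$, worst-member $\lambda$-invariance is only guaranteed once dominance exceeds a factor of roughly $\sqrt{n-1} \approx 14$.
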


\begin{proof}
Substitute $R_i = C + \lambda s_i - p_i$ into Eq.~\ref{eq:defs} and take the
respective limits.
\end{proof}

Proposition~\ref{prop:lambda} says the z-score \emph{cancels the shaping
coefficient}: an $\varepsilon$-scale signal is stretched to full z-scale
advantages per pooled step-sample (the measured $\pm 5$--$9$ fingerprint, within
the ${\approx}14$ bound above). \emph{Within the
$\lambda\sigma_s \gg \max(\sigma_p, \epsilon)$
regime}, no reduction of $\lambda$ changes the update; once
$\lambda\sigma_s \sim \max(\sigma_p, \epsilon)$ the shaping term no longer
controls the pool and the first limit gives way,
so the statement is conditional, not absolute. With $\epsilon = 10^{-6}$ (the
framework default), the regime holds while the signal retains within-group
variance; at saturation $\sigma_s \to 0$ and the shaping contribution vanishes
(Proposition~\ref{prop:variance}). This one line predicts two otherwise separate
facts: the anneal arm collapsing unchanged despite $\lambda$ having already
fallen to ${\approx}0.068$ by the turn
(\S\ref{sec:satrace}), and the registered prediction that std runs behave
identically across $\lambda \in \{0.01, 0.03, 0.1\}$. The sweep confirms all
three points: amplified scales $\pm 7.3 / \pm 9.1 / \pm 7.2$ (no trend over a
tenfold range), collapse in every arm (endpoints 1.0/0.0/1.0\%), timing within
the variance band. The coefficient is cancelled; what remains is the variance
structure.

\begin{proposition}[Saturation-endpoint case of the variance criterion]
\label{prop:variance}
In the setting of Proposition~\ref{prop:lambda} (an all-fail group with
$R_i = C + \lambda s_i - p_i$; the dominance premise is not inherited), if the
signal saturates
so that its within-group deviations vanish on the scale of the remaining spread
($\lambda\sigma_s \ll \max(\sigma_p, \epsilon)$), the \emph{shaping
component's} contribution to the advantage vanishes,
\begin{equation}
\frac{\lambda\,(s_i - \bar{s})}{\sigma_{\lambda s - p} + \epsilon}
\;\longrightarrow\; 0 ,
\label{eq:starve}
\end{equation}
and the amplifier can no longer feed on the signal. If the penalty spread also
vanishes ($\sigma_p \to 0$), the $\epsilon$-floor reactivates and the amplifier
starves outright; otherwise the z-score is taken over by the penalty term
(the second limit of Proposition~\ref{prop:lambda}), amplified pressure that
rewards surface compliance rather than the signal, the same pressure every
healthy baseline carries. In either case the danger a dense signal itself
contributes under std normalization is governed not by its magnitude or density
but by its \emph{within-group variance at saturation}.
\end{proposition}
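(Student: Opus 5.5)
I will reuse the decomposition from Proposition~\ref{prop:lambda}: substitute $R_i = C + \lambda s_i - p_i$ into Eq.~\ref{eq:defs}. The shared task return $C$ cancels against the group mean, so the numerator is $\lambda(s_i-\bar s) - (p_i-\bar p)$ and the denominator is $\sigma_{\lambda s - p} + \epsilon$. I will treat the two numerator pieces separately and track only the shaping piece, $\lambda(s_i-\bar s)/(\sigma_{\lambda s-p}+\epsilon)$, since Eq.~\ref{eq:starve} concerns that piece alone.

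\textbf{Step 1 (numerator).} Over a pool of $n$ members, each deviation obeys $|s_i - \bar s| \le \sqrt{n}\,\sigma_s$; the sharper bound $\frac{n-1}{\sqrt n}\sigma_s$ from \S\ref{sec:props} also works. Hence the shaping numerator is $O(\lambda\sigma_s)$.

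\textbf{Step 2 (denominator).} I need a lower bound of order $\max(\sigma_p,\epsilon)$. The $\epsilon$ term contributes at least $\epsilon$ directly. For the penalty part, the triangle inequality for standard deviations (a seminorm on centered vectors) gives $\sigma_{\lambda s - p} \ge \sigma_p - \lambda\sigma_s$. Combining,
\[
\sigma_{\lambda s-p}+\epsilon \;\ge\; \max(\sigma_p,\epsilon) - \lambda\sigma_s \;\ge\; \tfrac12\max(\sigma_p,\epsilon)
\]
once $\lambda\sigma_s \le \tfrac12\max(\sigma_p,\epsilon)$. The shaping component is then bounded in magnitude by $2\sqrt n\,\lambda\sigma_s/\max(\sigma_p,\epsilon)$, which tends to $0$ under the hypothesis $\lambda\sigma_s \ll \max(\sigma_p,\epsilon)$. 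The constant depends on $n$. That is harmless for a fixed pool size ($n \le 200$), but I will state the limit as the ratio $\lambda\sigma_s/\max(\sigma_p,\epsilon)\to 0$ at fixed $n$.

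\textbf{Step 3 (the two sub-cases).}
\begin{itemize}
\item If $\sigma_p \to 0$ as well, then every numerator deviation is $O(\sqrt n(\lambda\sigma_s+\sigma_p))$ and the denominator is at least $\epsilon$. The whole advantage therefore collapses to $0$: this is the $\epsilon$-floor starvation.
\item Otherwise $\sigma_p$ dominates. Then $\sigma_{\lambda s-p} = \sigma_p(1+o(1))$, again by the triangle inequality, and the advantage tends to $-(p_i-\bar p)/\sigma_p$. This recovers the second limit of Proposition~\ref{prop:lambda}, which I may cite directly.
\end{itemize}
The closing sentence about ``within-group variance at saturation'' is interpretive. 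It follows because the bound in Step 2 depends on the signal only through $\lambda\sigma_s$, not through $\lambda$ alone or through the size or density of the $\Phi$ values.

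\textbf{Main obstacle.} The hard part is making ``$\ll$'' and ``$\to$'' precise rather than heuristic. In particular, the shaping and penalty terms may be correlated, so a naive variance-addition argument fails. The triangle-inequality lower bound in Step 2 is what handles this. I will therefore phrase the result as an explicit inequality with an $n$-dependent constant, then read off the limit.
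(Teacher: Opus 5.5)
Your proposal is correct and is essentially the paper's own argument. The paper gives no separate proof for Proposition~\ref{prop:variance}; it relies on the same one-line ``substitute $R_i = C + \lambda s_i - p_i$ into Eq.~\ref{eq:defs} and take the respective limits'' used for Proposition~\ref{prop:lambda}, and your reverse-triangle-inequality bound $\sigma_{\lambda s - p} \ge \sigma_p - \lambda\sigma_s$, with the explicit rate $2\sqrt{n}\,\lambda\sigma_s/\max(\sigma_p,\epsilon)$, makes that limit step rigorous and robust to correlation between $s$ and $p$, which the paper leaves implicit.
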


\subsection{The variance-trajectory criterion}
\label{sec:variance-profile}
What z-scoring amplifies is the signal's within-group \emph{variance}, so we state
the danger condition over the whole variance trajectory, not only its endpoint:
\textbf{a dense signal is dangerous under std normalization to the extent that it
sustains within-group variance during the phase when that variance dominates
the group advantage}. All-fail domination is the sparse-success route into that
phase (our collapse configuration); the WebShop easy regime shows a second
route, where continuous task scores over short episodes let prediction variance
dominate from step~0 (\S\ref{sec:generalization}). The window is not defined by
the harm it produces, but our ex-ante observability is uneven and we state it
exactly. The all-fail group fraction was measured live in the two group-size
ablations (its climb precedes their literal-zero windows) and retrodicted for
the original run (\S\ref{sec:failure}); the shaping term's share of the pooled
step-sample spread is implementable as a monitor (the terms enter the reward
separately) but our runs logged only batch means (\S\ref{sec:props}); and for
the continuous-score route neither proxy applies, so the warning for that
regime (\S\ref{sec:guidance}) rests on the measured failure itself, not on a
monitored precursor.
For saturating signals the endpoint ($\sigma_{\text{within}}$ at mastery) is the
limiting summary, but the operative quantity is the variance carried through the
variance-dominated window; a progress-style signal has its \emph{largest}
variance while the agent is still learning, so the criterion does not declare it
safe during training, only that its pressure decays as mastery approaches.
Empirically the signal forms separate as the criterion predicts:
difference-form variance persists to the absorbing state (collapse); saturated
always-positive confidence starves the signal-fed amplification (no collapse);
progress-style $\Delta$acc plateaus below saturation and keeps paying (chronic
drag: the endpoint form of the criterion would have called it safe, the
trajectory form does not); anchor-QA over-warns (drag at training caliber
only). A matched-variance noise control adds the second axis: sustained
variance plus \emph{hackability} is necessary in every reward-channel lock we
observe but not jointly sufficient. The full walkthrough is in
Appendix~\ref{app:criterion-detail}.
The criterion is consistent with published reward-channel successes in the
weak sense that neither contradicts it (RWML is level-scored, outside the
difference-form family; VAGEN's estimator coordinates are unreported, and our
GiGPO arm shows redesign alone does not decide); it issues no safety
certificates, and the full compatibility discussion is in
Appendix~\ref{app:criterion-detail}.
Proposition~\ref{prop:variance} formalizes the criterion's \emph{endpoint}
case; the trajectory form, which tracks $\sigma_s$ through the
variance-dominated window rather than only at saturation, is its empirical
strengthening, the one the $\Delta$acc and anchor-QA arms forced. The
saturation quantity is the same one that lower-bounds the policy-gradient
magnitude in \citet{leng2025mmr1}. This is a
signal-side taxonomy, orthogonal to algorithm-side variance work that redesigns
the normalizer or its weights \citep{xiao2025bnpo,tao2025hero}; its tests were
preregistered before the runs completed.

\section{Experimental Setting}
\label{sec:exp-setup}

\paragraph{Environments and framework.}
Three environments carry results, all driven by the step-independent multi-turn
agent stack of \S\ref{sec:setting} on the open-source verl-agent framework
(hardware and per-environment configuration in Appendix~\ref{app:config}).
ALFWorld \citep{shridhar2021alfworld} is the primary testbed (sparse terminal
success 10 / failure 0; horizon 50); WebShop \citep{yao2022webshop} replicates
the matrix core under a \emph{continuous} terminal matching score in two
regimes (small: 1k products, baseline ${\sim}60\%$; full: 1.18M products with
human goals, baseline ${\sim}29\%$; horizon 15); HiddenRule-Gym is our
synthetic rooms-and-devices POMDP, described below.

\paragraph{Models and configuration.}
Qwen3-1.7B/4B/8B, identical configuration across scales (train batch 8 $\times$ group 4
$=$ 32 trajectories/step; validation 32 episodes, 16 for HiddenRule-Gym at 1.7B
with the 4B HRG capacity arm at 32, sampled decoding at temperature 0.4).
Endpoints are last-6 validation means (single-point binomial noise up to
$\pm 8.8$pt; last-6 SE $\pm 3.4$pt from the measured single-point $\sigma$ of
8.4), seed 0 unless stated (the headline auxiliary-loss arm, both leading
placebos, and the ALFWorld baseline carry seeds 0/42/96); formal endpoint numbers from a unified 140-game
evaluation of every arm entering a cross-arm ordering claim are reported in \S\ref{sec:e01}
(Table~\ref{tab:e01}), and prose keeps last-6 numbers where it describes training
dynamics.

\paragraph{Budget.}
All arms run at one quarter of published full-budget ALFWorld training (32
trajectories/step, 150 steps) so fourteen mutually comparable arms fit one
compute envelope; every claim is a same-budget arm-to-arm comparison and none
depends on absolute success rates (published baselines, a full-budget sanity
anchor at 67.2\%, and state-of-the-art reference points are in
Appendices~\ref{app:mech-detail} and~\ref{app:sota}).

\paragraph{HiddenRule-Gym (HRG).}
A synthetic rooms-and-devices POMDP with four hidden-rule families (conjunction, sequence,
XOR, count; train/probe family split), a BFS oracle sharing the environment's pure
transition function, and (uniquely) \emph{exactly computable} feature coverage
$C = I(\Phi;s)/H(s)$ over non-terminal reachable states, with a greedy mask ladder giving
measured coverage levels for the sweep in \S\ref{sec:hrg}. Privileged-latent belief probes
carry a 5-gram leakage audit.

\paragraph{Arms, baselines, and evaluation calibers.}
Every claim is a same-budget arm-to-arm comparison against a matched GRPO
baseline trained under the identical configuration (three seeds at 4B on
ALFWorld; per-scale and per-environment baselines elsewhere); the study
comprises 74 training arms, inventoried with endpoints in
Appendix~\ref{app:inventory}, and its predictions were preregistered by SHA256
digest in a public version-controlled file before the corresponding runs
completed (ledger in Appendix~\ref{app:prereg}). Two evaluation calibers
recur throughout: \emph{training caliber} (the last-6 validation means above,
used for training dynamics) and \emph{formal endpoints} (the unified 140-game
re-evaluation of \S\ref{sec:e01}, used for every cross-arm ordering claim;
WebShop uses the analogous 200-goal protocol).

\section{Experimental Results}
\label{sec:failure}

\subsection{Three-scale collapse and the mechanism, measured}

\begin{wrapfigure}{r}{0.5\linewidth}
  \vspace{-1.1\baselineskip}
  \centering
  \includegraphics[width=\linewidth]{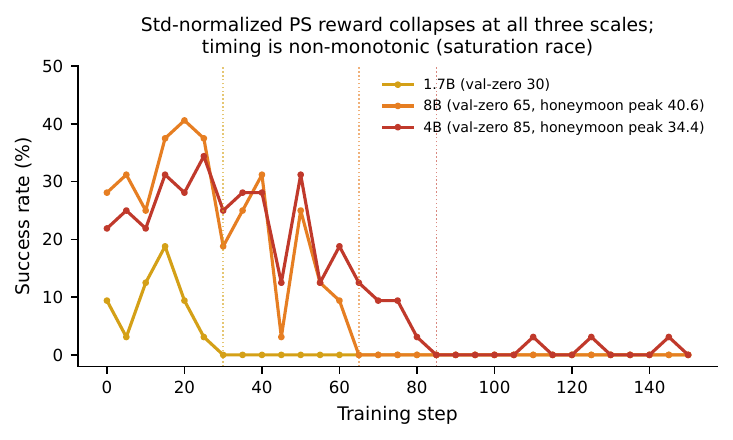}
  \caption{The std-normalized prediction reward collapses every run at all three scales;
  timing is non-monotonic (saturation race, \S\ref{sec:satrace}). Honeymoon peaks
  \emph{rise} with scale: the signal lifts each arm's own early trajectory until
  hacking pressure arrives. Neither honeymoon peak exceeds its arm's formal
  baseline (8B $40.6 < 45.0$; 4B $34.4 < 42.9$); the 1.7B curve's single
  transient point ($18.8$, not a honeymoon by the Table~\ref{tab:collapse}
  criterion) sits at its formal baseline's level ($17.9$).}
  \label{fig:three-scale-curves}
  \vspace{-0.9\baselineskip}
\end{wrapfigure}

Sparse reward $\to$ all-fail groups $\to$ the only within-group differences are
$\varepsilon$-scale $\lambda\,\rpred$ $\to$ std normalization amplifies them to full scale
$\to$ the gradient rewards \emph{predictability} $\to$ the policy drifts toward familiar
states $\to$ fewer successes $\to$ stronger domination: a self-reinforcing loop, and at
saturation $\rpred\to 0$ strands the policy in the degenerate basin.
The advantage fingerprint makes the amplification visible: all-fail-group advantages reach $\boldsymbol{\pm 5}$--$\boldsymbol{9}$ under std normalization (inside the pooled step-sample bound of ${\approx}14$, \S\ref{sec:props}) versus the unamplified $\pm 0.1$ scale ($=\lambda$) without it.
Replay forensics on a collapsed checkpoint, constant-budget group-size
ablations ($n{\in}\{4,8,16\}$: literal zero at steps 85/135/80, non-monotone),
and the all-fail-fraction telemetry (a climb to a terminal lock at $1.00$
coinciding with the literal-zero window) confirm each link of the loop, and a
tenfold $\lambda$ reduction leaves the amplified scale unchanged ($\pm 7.2$
vs.\ $\pm 7.3$), as Proposition~\ref{prop:lambda} predicts;
Figure~\ref{fig:groupsize-runaway} shows the group-size ablations and the
all-fail lock, and full measurement detail is in
Appendix~\ref{app:mech-detail}.

\begin{table}[H]
\centering
\begin{tabular}{lcccc}
\toprule
scale & honeymoon peak & turn & val-zero from & terminal triple \\
\midrule
1.7B & --- & ${\sim}15$--$20$ & 30 & pred $1.000$ / len 50 / succ 0 \\
4B   & 34.4 @25 & ${\sim}45$ & 85 & same \\
8B   & \textbf{40.6 @20} & ${\sim}48$ & 65 & same \\
\bottomrule
\end{tabular}
\caption{Collapse phenomenology. \emph{Turn} $=$ behavioral inflection (validation
begins its monotone decline); \emph{val-zero} $=$ first checkpoint at literal 0\%
success. These are distinct quantities, and the saturation-race ordering in
\S\ref{sec:satrace} refers to val-zero. Baselines for reference (last-6, log-exact):
1.7B 20.9\%, 4B 49.5\%, 8B 32.8\%. \emph{Honeymoon} is operationalized as at least
two consecutive validation points above the matched baseline's contemporaneous
curve; the 1.7B arm's single transient point (18.8\% at step 15, inside the
${\pm}8.8$pt single-point noise) does not qualify, hence ``absent'' there.}
\label{tab:collapse}
\end{table}

\begin{figure}[t]
  \centering
  \includegraphics[width=\linewidth]{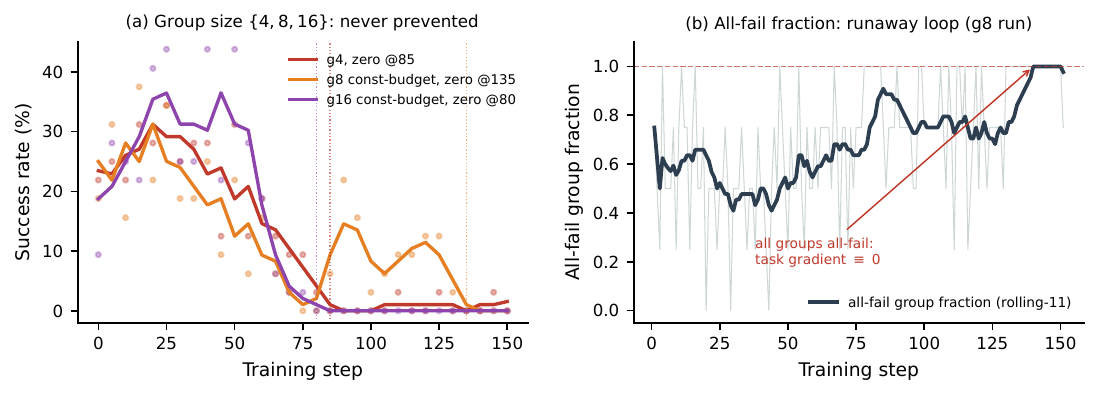}
  \caption{Left: constant-budget group-size ablations. Collapse occurs at every group size in
  $\{4,8,16\}$ with non-monotone timing (literal zero at steps 85/135/80): group size
  neither prevents nor reliably delays. Right: the all-fail group fraction along the
  $n{=}8$ run (rolling mean over 11 steps); its terminal lock at $1.00$ coincides with
  the literal-zero validation window: the absorbing state of the self-reinforcing loop measured
  directly (the $n{=}16$ run shows the same lock from step ${\sim}75$).}
  \label{fig:groupsize-runaway}
\end{figure}

\subsection{Saturation race, irreversibility, and early warning}
\paragraph{A saturation race, not a coefficient response.}
\label{sec:satrace}
We registered ``weaker model collapses earlier, 8B latest'' and the data falsified it:
1.7B@30 $<$ \textbf{8B@65} $<$ 4B@85 (val-zero caliber throughout; Table~\ref{tab:collapse}). Revision: collapse timing is a \emph{race} between
prediction-saturation speed and task-learning speed; the 8B model learns everything faster,
including how to be predictable. The honeymoon peak rising with scale (34.4 $\to$ 40.6)
shows the early transient scales with capacity; note it lifts the arm's own
trajectory only, and neither honeymoon peak exceeds its arm's formal baseline
(the 8B peak tops the artifact-depressed training-time 32.8, not the formal
45.0; the 1.7B single transient, no honeymoon by the Table~\ref{tab:collapse}
criterion, sits at its formal baseline's level, 18.8 vs.\ 17.9).

\paragraph{Irreversibility.}
Nothing recovers a collapsed policy: 120 post-collapse steps at fixed
$\lambda$ (1.7B), cosine annealing to zero (the turn arrives at
$\lambda{\approx}0.068$, step ${\sim}57$; the final ${\sim}30$
effectively-pure-GRPO steps show zero recovery), and a preregistered post-hoc
normalizer switch (resuming the seed-42 arm 70 steps inside its absorbing
state: 11 checkpoints at literal 0\%, the amplifier verifiably gone,
magnitudes $\pm 5$--$9 \to {\approx}0.1$) all fail; post-collapse entropy
\emph{rebounds} at all scales (0.39/0.44/0.52 ordered by reference-model
entropy), so the absorbing state is behavioral, not temperature-level. Three
preventive routes work: mean-only from the start, all-fail-group filtering
(partial, 39.6\%), and timed withdrawal (a fast-anneal arm completing inside
the honeymoon window survives at 40.1\%, Fig.~\ref{fig:collapse-fixes}).
Irreversibility binds after the turn, not before; full detail in
Appendix~\ref{app:mech-detail}.

\paragraph{Early-warning signature.}
A frozen conjunction rule (entropy-has-declined $\wedge$ prediction
${\ge}0.95$ $\wedge$ length ${\ge}0.95\,H$; thresholds calibrated only on the
four definition-set collapse runs) detects all three held-out
\emph{reward-channel} collapses \textbf{73/53/13 steps ahead} of the collapse point,
with at most one false alarm among the eleven training-healthy arms; the full
rule, roster, three-caliber definitions, and independence caveats are in
Appendix~\ref{app:mech-detail}.

\subsection{Single-factor rescue and the fix hierarchy}
\label{sec:rescue}

\begin{figure}[t]
  \centering
  \includegraphics[width=\linewidth]{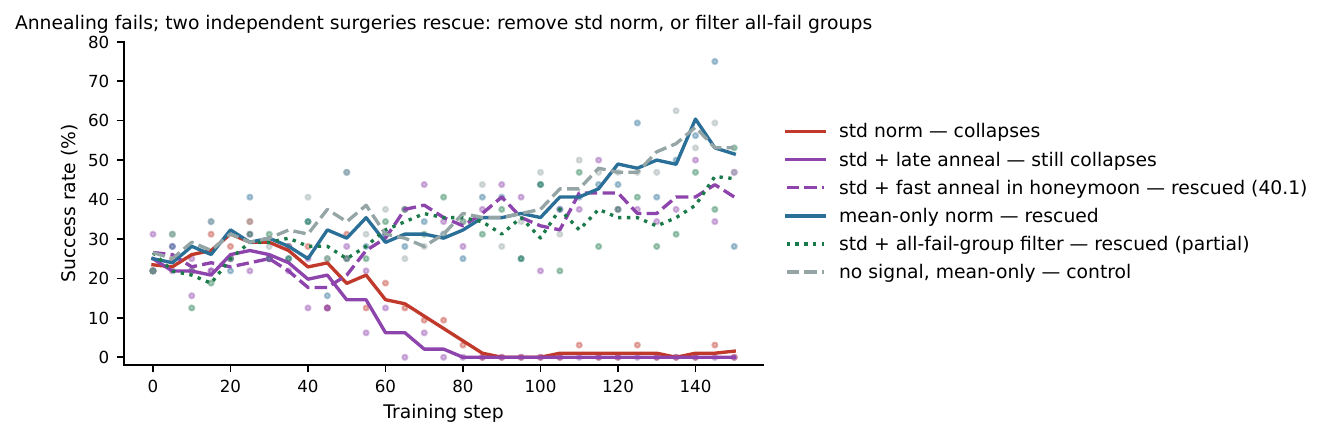}
  \caption{Two independent interventions rescue. The collapsed arm and the mean-only arm
  differ only in \texttt{norm\_adv\_by\_std\_in\_grpo}; the filter arm (dotted) keeps std
  normalization and instead drops all-fail groups from the gradient. At step 85, the
  unfiltered arm's literal-zero step, it sits at 31.2\%, and it finishes at 39.6\%
  (last-6). Late annealing does not rescue, but the same anneal applied \emph{inside}
  the honeymoon window does (dashed, 40.1\%): withdrawal timing relative to the
  honeymoon decides. The no-signal control matches the mean-only arm
  (net attribution $\approx 0$).}
  \label{fig:collapse-fixes}
\end{figure}

\paragraph{The rescue.}
Flipping one flag (advantages centered by group mean but not divided by group std) turns
0\% into \textbf{51.6\%} ($\geq$ baseline 49.5\%), peak 75.0, no terminal triple, prediction
accuracy does not saturate (0.49--0.64). This is the causal localization: the std term is
necessary for the catastrophe.

\paragraph{The second intervention.}
A preregistered companion arm keeps std normalization untouched and instead drops
all-fail groups from the gradient (RAFT-style filtering; 23--77\% of samples
zeroed per step). It also prevents collapse (no terminal triple; prediction
accuracy oscillates at 0.6--0.8) and finishes at \textbf{39.6\%}: rescued from
0\% but ${\sim}10$ points below baseline, consistent with zeroing a large,
fluctuating share of the samples (23--77\% per step) and with residual pressure
inside mixed groups. The two interventions
complete an ALFWorld knock-out matrix: remove the signal (baseline), the
amplifier (mean-only), or the all-fail groups (filter) and training is healthy;
only the full triple collapses. (On WebShop, where the terminal score is
continuous and strict all-fail groups do not form in either mode, the all-fail
leg does not apply; prediction variance dominates through low return spread
instead, \S\ref{sec:variance-profile}.) The asymmetry (mean-only restores parity, filtering
rescues partially) makes mean-only the preferred prescription, the filter arm
independently confirming that the pathology lives in the amplification of
all-fail groups, the same groups DAPO-style dynamic sampling discards for
variance reasons \citep{yu2025dapo}.

\paragraph{Net attribution.}
The control (mean-only normalization, \emph{no} prediction signal) lands at
\textbf{52.6\%}, dead even with 51.6\% ($-1.0$pt, far inside single-seed validation
noise; descriptive comparison): the prediction reward's net contribution in the
mean-normalized channel is $\approx 0$, and the rescue belongs to the
normalizer. Secondary finding: mean-only $\geq$ std baseline independently
replicates Dr.GRPO in long-horizon agents, with a caliber-dependent margin
($+3.1$ training on the matched-seed pair, 52.6 vs.\ 49.5; formally $+13.8$
against the three-seed mean and $+15.0$ on the matched seed-0 pairing, 57.9 vs.\
44.1/42.9). Residual: the PS arm sharpens
${\sim}20$ steps earlier at equal endpoint (``speed, not height''; single seed).

\paragraph{Two further arms sharpen the criterion and close the fix list.}
A preregistered matched-variance \emph{noise} control (random potentials, same
per-step variance, not policy-controllable) drags severely (23.4\%, $-26$ vs.\
baseline) yet never locks: hackability is the criterion's second axis, and the
two axes are necessary in every observed reward-channel lock but not jointly
sufficient (the $\Delta$acc and anchor-QA arms satisfy both and neither
locks). Per-channel \emph{decoupling} (GDPO-style, contribution capped at
$\lambda$) does not collapse but under-performs (31.2/42.2 last-6; formal
27.9/32.1, both below matched baselines): the per-channel z-score is a capped
copy of the same amplifier, and advantage \emph{magnitude} does not order the
outcomes, amplification structure does (the harmless mean-only arm's raw
$\pm 0.098$ exceeds the dragging capped arm's $0.066$--$0.077$).
Fig.~\ref{fig:dose} plots magnitude against outcome; full paragraphs are in
Appendix~\ref{app:fix-detail}.

\begin{figure}[t]
  \begin{minipage}[t]{0.42\linewidth}
    \centering
    \includegraphics[width=\linewidth]{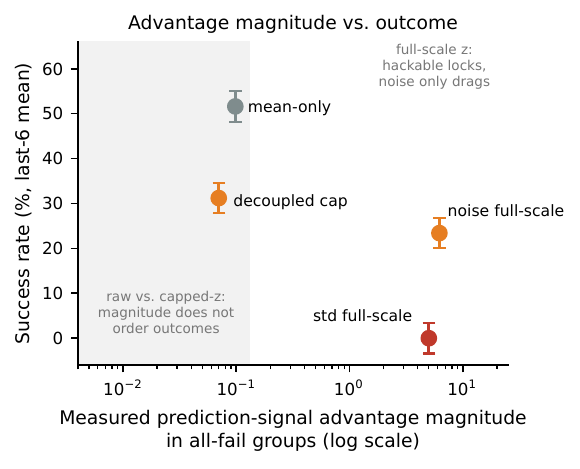}
    \caption{Advantage magnitude versus outcome in all-fail groups (log $x$).
    Magnitude does not order the outcomes: the harmless mean-only arm's raw
    contribution ($\pm 0.098$) exceeds the dragging decoupled arm's capped one
    (${\approx}0.07$). The split tracks amplification structure (raw / capped
    z-scored / full-scale z-scored) and, at full scale, hackability
    (matched-variance noise drags but does not lock).}
    \label{fig:dose}
  \end{minipage}\hfill
  \begin{minipage}[t]{0.55\linewidth}
    \centering
    \includegraphics[width=\linewidth]{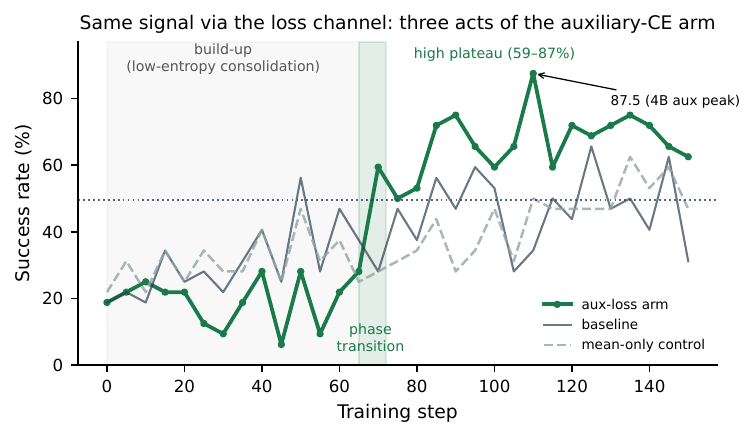}
    \caption{The auxiliary-loss arm's three phases: low-entropy build-up
    (entropy 0.04, a false ``erosion'' alarm), phase transition (steps 65--72,
    28${\to}$59\%), high plateau (59--87\%; 4B aux-arm peak 87.5). The
    interference probe (task-batch log-prob shift across the auxiliary update)
    stays at zero throughout: the auxiliary gradient coexists cleanly with the
    task gradient.}
    \label{fig:aux-headline}
  \end{minipage}
\end{figure}

\subsection{The channel effect}
\label{sec:channel}

\begin{figure}[t]
  \centering
  \includegraphics[width=0.8\linewidth]{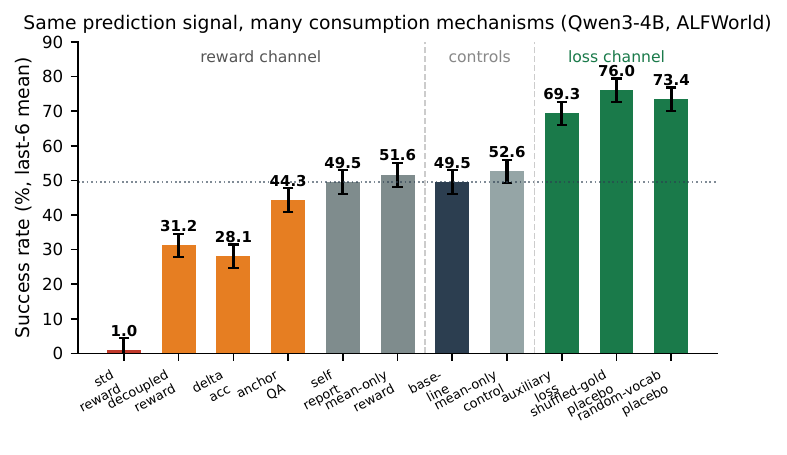}
  \caption{The signal-delivery matrix (11 of the 14 arms of
Table~\ref{tab:channel} shown; the all-fail filter, compute-matched replay, and
format-stripped placebo omitted for width): same
prediction signal, varying consumption mechanism and normalization, identical
environment/model/budget. Regime membership tracks the delivery channel rather than the signal form. Numbers here are training-time last-6 means (the quantity the
  dynamics narrative tracks); formal 140-game endpoints for every arm are in
  Table~\ref{tab:e01}.}
  \label{fig:channel-matrix}
\end{figure}

\begin{table}[H]
\centering
\scriptsize
\begin{tabular}{llllr}
\toprule
arm & channel & signal form & normalization & last-6 \\
\midrule
std reward & reward & diff.-form & std & 1.0\% \\
decoupled reward & reward & diff.-form, per-channel cap & std (decoupled) & 31.2\% \\
$\Delta$acc progress & reward & clipped accuracy progress & std & 28.1\% \\
mean-only reward & reward & diff.-form & mean-only & 51.6\% \\
all-fail filter & reward & diff.-form & std, all-fail groups dropped & 39.6\% \\
mean-only control & --- & \emph{no signal} & mean-only & 52.6\% \\
self-report & reward & always-positive confidence & std & 49.5\%\footnotemark \\
anchor-QA & reward & recall accuracy & std & 44.3\% \\
baseline & --- & --- & std & 49.5\% \\
compute-matched replay & --- & no aux target, $2\times$ epochs & std & 48.9\% \\
\textbf{auxiliary loss} & \textbf{loss} & teacher-forced CE on gold predict & std untouched & \textbf{69.3\%} \\
\textbf{shuffled-gold placebo} & \textbf{loss} & CE on permuted gold & std untouched & \textbf{76.0\%} \\
\textbf{random-vocabulary placebo} & \textbf{loss} & CE on random out-of-domain strings & std untouched & \textbf{73.4\%} \\
\textbf{random-token placebo} & \textbf{loss} & CE on bare random words (format-stripped) & std untouched & \textbf{61.5\%} \\
\bottomrule
\end{tabular}
\caption{The signal-delivery matrix. On class means the loss-channel arms beat every
reward-channel variant with the std reward channel left untouched; the shuffled-gold
placebo matching the true-gold arm shows exact content--context pairing is not the active
ingredient (see text). The auxiliary-loss arm carries three seeds (69.3/63.5/50.0, mean
60.9), and both leading placebos now carry three seeds of their own
(shuffled 76.0/60.4/48.9; random-vocabulary 73.4/77.6/60.9, training-caliber):
five of the six placebo seeds land above the baseline band and the sixth inside
it (none below, none collapsing; class means 66.2 vs.\ 60.9), so the
class-level ordering (placebo $\geq$ gold) is seed-replicated. Remaining
single-seed cells are marked in the text.}
\label{tab:channel}
\end{table}
\footnotetext{The self-report arm's last-6 mean coincides with the baseline's
(both 49.5\%) by arithmetic accident: the two arms' final six validation points
differ individually but share the same sum. Verified against raw logs; not a
transcription error.}

Four confounds separate the auxiliary-loss arm from baseline (prompt format,
predict-block generation, extra update, signal content). The mean-only pair
differs in signal presence, not prompt alone, so it bounds the \emph{joint}
effect of the first two plus the mean-channel signal at ${\approx}0$ at 4B,
assuming no mutual cancellation (the study's only direct prompt-only arm, at
8B, shows a $+7.1$ point-estimate gain with overlapping intervals,
Table~\ref{tab:e01}, so the prompt-side term is not separately measured and the
bound is channel- and scale-local); the shuffled-gold placebo preserves
compute/update-count/mask structure while destroying content--context pairing,
separating the third from the fourth. The placebo verdict landed on the
informative side: shuffled-gold \emph{matches} true gold (76.0 vs.\ 69.3;
descriptively indistinguishable), so exact content--context pairing is \emph{not}
the active ingredient, and the channel gap survives strengthened: on class means
the loss-channel arms beat every reward-channel variant \emph{without requiring
correct labels} (the weakest gold seed, 50.0, lands level with the mean-only
arms' 51.6--52.6). Seed replication
of the true-gold arm (69.3/63.5/50.0 at seeds 0/42/96) sets the effect size,
$\mathbf{+14.2}$ on matched-seed pairing (SE 8.0; $+19.8/+24.4/-1.6$), every seed above
the three-seed baseline mean though the third sits 1.6 below its matched baseline,
with three qualitatively different phase-transition timings (steps ${\sim}70$,
${\sim}100$, ${\sim}25$) all converging to an elevated plateau.

Two further controls complete a graded attribution ladder (full detail in
Appendix~\ref{app:attr-detail}): the preregistered random-vocabulary strong
placebo reaches \textbf{73.4\%}, statistically tied with gold (69.3) and
shuffled gold (76.0); a compute-matched replay control gains nothing (48.9);
and a format-stripped random-token placebo captures a majority of the effect
(61.5). The ladder reads baseline 49.5 ${\approx}$ compute replay 48.9 $<$
random-token 61.5 $<$ schema placebos 73.4--76.0 ${\approx}$ gold 69.3;
compute contributes nothing and content contributes nothing \emph{positive}
at any level (formally, the gold class mean sits below the placebo class mean
at both matched seeds, \S\ref{sec:e01}). Figure~\ref{fig:aux-headline} shows
the headline arm's three-phase dynamics and its zero-interference probe. A preregistered gradient-noise
falsifier (structure-matched Rademacher $\pm\beta$ sign noise) lands at
43.2\% last-6, inside the baseline band: the gain requires \emph{directed}
novel-target gradients, not update noise. Stated at its strongest: on class
means, a format-consistent auxiliary CE pass on \emph{arbitrary} strings
outperforms every reward-channel delivery of a genuinely informative signal;
gains attributed to world modeling in prior work may be substantially
attributable to the auxiliary update itself.

\section{Discussion: Boundaries, Ablations, and Formal Evaluation}
\label{sec:generalization}

Two extensions probe the boundary of the channel result: the same auxiliary-loss recipe
at 8B, and the matrix core replicated on a second environment.

\subsection{At 8B the loss channel turns bistable}
Running the auxiliary-loss recipe unchanged on Qwen3-8B (same $8{\times}4$
configuration, 150 steps) breaks the no-seed-below-baseline pattern (a
formal-caliber pattern at 4B: 68.6/57.9/52.1 all above their matched
42.9/40.7/48.6; at training caliber the third 4B seed already sat 1.6 below
its matched baseline), in both directions.
Seed 0: a three-point early transient (28.1/34.4/25.0, comparable to the
baseline's contemporaneous band rather than above it, so not a honeymoon in the
Table~\ref{tab:collapse} sense) collapses at steps 15--20 into
\emph{27 consecutive zero-success evaluations} (130 steps; the deepest absorbing state
we measure), with the full terminal phenotype: entropy
$0.45{\to}0.03$, per-step \emph{response} length $179{\to}36$ tokens (episode
length does not pin here: this failure class bypasses the length-pinning channel,
consistent with the criterion not covering it), prediction-parse validity and
all-fail-group fraction both pinned at $1.000$. Seed 42: the \emph{same} collapse
onset (three zero evaluations, steps 20--30), then spontaneous escape and a climb
to a last-6 mean of \textbf{72.4\%} (peak 90.6; formal \textbf{85.0\%},
$\mathbf{+40.0}$ over the formal 45.0\% baseline, the study's best gold-signal
endpoint). Same recipe,
data, and budget; 72 points of outcome variance across two seeds. This is not
generic fine-tuning seed variance
\citep{dodge2020finetuning,mosbach2021stability}: two of the three seeds enter
the same early collapse window (steps 15--30) and the third locks later
(${\sim}$step 85, from the lead), yet every locked outcome shows the same
absorbing phenotype, so the outcomes separate into two mechanistically distinct
attractors reached by different routes rather than a diffuse spread (cf.\
scale-emergent instabilities in pretraining, \citealp{wortsman2023smallscale}). Two observations matter for the paper's
claims. First, this collapse occurs \emph{without} the reward channel or z-score
amplification (auxiliary CE pressure under task-gradient starvation), a failure
class the variance-trajectory criterion does not cover. Second, the loss
channel's no-harm record is scale-qualified: at 4B the auxiliary channel
was safe in ten of ten completed ALFWorld runs; at 8B gold full-weight locks two
of three seeds and sends the third to the study's best gold-signal result, a
scale dual of the reward channel's 4B law (there: collapse certain, timing
random; here: occurrence itself random).

Family controls localize the lock (Fig.~\ref{fig:8b-family}; full detail in
Appendix~\ref{app:env-detail}): shuffled-gold
climbs healthy to 78.6 (formal 79.3, level with the escape seed's 85.0), so
the \emph{gain} side of the content-free attribution carries to 8B;
prompt-only stays healthy (47.4), so the prompt alone explains nothing; a
half-weight arm on the locked seed escapes (65.1), so the coefficient
participates causally in basin entry (no tension with
Proposition~\ref{prop:lambda}: $\beta$ never passes through advantage
normalization); and the third gold seed locks from the lead (71.9 $\to$ 0 at
${\sim}$85). Gold full-weight locks two of three seeds; every content-free or
reduced-weight arm stays healthy; under a content-blind null the two healthy
auxiliary-CE arms have probability $(1/3)^2 \approx 0.11$ (0.04 counting
prompt-only), suggestive but six samples remain six
(\S\ref{sec:limitations}).

\begin{figure}[t]
  \begin{minipage}[t]{0.49\linewidth}
    \centering
    \includegraphics[width=\linewidth]{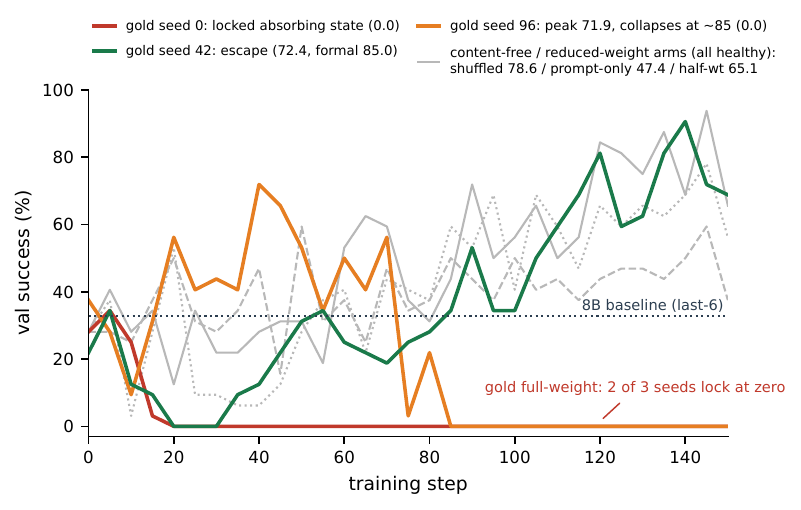}
    \caption{The 8B family. Gold full-weight auxiliary CE is bimodal across
    seeds: two of three seeds lock at literal zero (one after leading the field
    at 71.9\%), one escapes to the study's best gold-signal endpoint (85.0\%).
    Every content-free or reduced-weight arm (shuffled, prompt-only,
    half-weight $\beta{=}0.05$; gray) stays healthy. The risky ingredient at 8B
    is the true-gold signal at full weight, not the auxiliary pass itself.}
    \label{fig:8b-family}
  \end{minipage}\hfill
  \begin{minipage}[t]{0.48\linewidth}
    \centering
    \includegraphics[width=\linewidth]{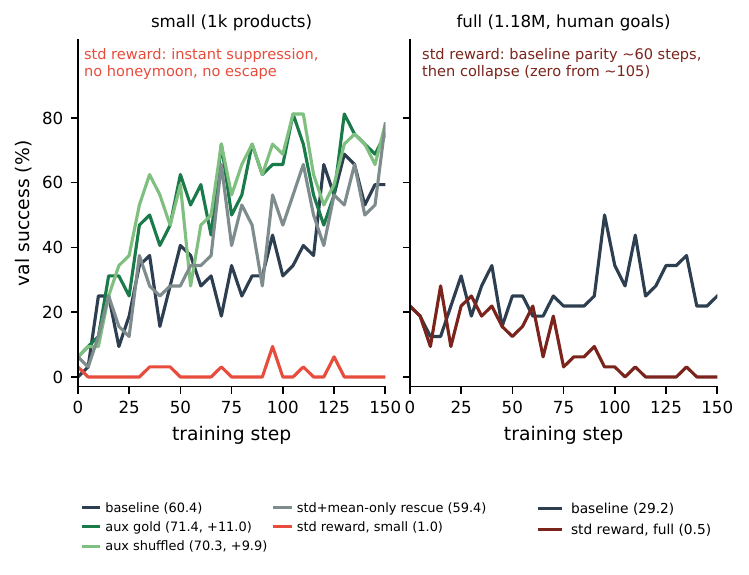}
    \caption{WebShop validation curves. Left: small mode, where the std reward
    arm is suppressed from step~0 (no honeymoon) while the auxiliary-loss arm
    beats the baseline by $+11.0$ training caliber (formal $+3.0$,
    statistically level). Right: full mode, where the std arm holds baseline
    parity for ${\sim}60$ steps, then collapses to a sustained floor,
    reproducing the ALFWorld collapse arc (turn, then sustained zero) with the
    honeymoon replaced by baseline parity, on a comparable timescale.}
    \label{fig:webshop-generalization}
  \end{minipage}
\end{figure}

\subsection{On WebShop the reward channel fails again; the loss-channel gain does not confirm}
\begin{table}[H]
\centering
\small
\begin{tabular}{lcc}
\toprule
arm (WebShop, 4B, last-6) & small (1k products) & full (1.18M, human goals) \\
\midrule
baseline & 60.4\% & 29.2\% \\
std prediction reward & \textbf{1.0\%} & \textbf{0.5\%} \\
auxiliary loss (true gold) & \textbf{71.4\%} ($+11.0$) & --- \\
auxiliary loss (shuffled gold) & \textbf{70.3\%} ($+9.9$) & --- \\
std reward $+$ mean-only rescue & \textbf{59.4\%} & --- \\
\bottomrule
\end{tabular}
\caption{The matrix core on a second environment
(Figure~\ref{fig:webshop-generalization}). Both std reward arms fail, in
mode-dependent forms: the small-mode arm does not leave the floor (max 9.4\%, no
honeymoon, no escape, valid-action ratio $1.000$, so not a parsing artifact); the
full-mode arm tracks its baseline for ${\sim}60$ steps, then collapses (sustained
zero from step ${\sim}105$). The auxiliary-loss arm's training-caliber gain ($+11.0$) compresses to $+3.0$
under the formal 200-goal evaluation (\S\ref{sec:e01} protocol family), statistically
level with its baseline. Success counts episodes the environment marks
\texttt{won} (a perfect terminal matching score); the training reward is the
continuous score itself (Appendix~C), and the 200-goal formal protocol uses the
same success definition.}
\label{tab:webshop}
\end{table}

The reward-channel failure generalizes with regime-dependent form (a
four-form taxonomy: honeymoon-collapse on ALFWorld; collapse-after-parity on
full-mode WebShop, within the 30--135 val-zero range the ALFWorld arms span
with group-size ablations included; instant suppression on small-mode WebShop;
chronic drag on the capped arms), the attribution and the rescue both transfer
(shuffled placebo 70.3 vs.\ gold 71.4, formally 67.0 vs.\ 63.0; mean-only
59.4, a $\mathbf{+58}$-point rescue over the std arm, a cross-normalization comparison
that could only flatter the arm and it still does not beat the baseline), and
the auxiliary gain is seed-split ($+11.0$/$-9.9$ training; $+3.0$/$-14.0$
formal), a single-seed-pair difference, not a confirmed effect size. On
HiddenRule-Gym the same auxiliary recipe is statistically inseparable from the
no-signal arm (17.1 vs.\ 10.7) and below the coverage-matched reward arm: the
loss-channel gain has conditions that one environment success plus a
seed-split second do not license extrapolating away. Estimator controls close
the localization: GiGPO (step-level grouping, group-std kept) collapses on
schedule; PPO/GAE (37.5\%), RLOO (40.6 vs.\ its own 44.8 baseline), and
REINFORCE++ (global-batch whitening, completed after the main freeze: 48.4
vs.\ its own 55.2 baseline, no collapse window) survive the identical recipe;
collapse tracks whether the advantage \emph{divides by a within-group
$\sigma$}, with the signal necessary (\S\ref{sec:rescue}) but not by itself
deciding. Full curves, per-arm paragraphs, and the estimator qualifiers (all
training-caliber, single-seed; the REINFORCE++ signal arm post-dates the
freeze) are in Appendix~\ref{app:env-detail}.

\subsection{Separating coverage, dynamics, and capacity}
\label{sec:hrg}

HRG's exactly computable coverage separates what ALFWorld conflates: the
coverage axis (no-signal floor with gradient starvation; non-covering $\Phi$
erodes belief probes; the coverage sweep pays at mid coverage,
Fig.~\ref{fig:csweep}) and the capacity axis (4B lifts off the 1.7B floor
without cracking rules; training caliber). Full arm-by-arm detail is in
Appendix~\ref{app:env-detail}.

\paragraph{Failure matrix.}
Low coverage $=$ wrong beliefs (B/C); adequate coverage $+$ bad dynamics $=$ collapse
(ALFWorld $+$ std); coverage $+$ clean dynamics $+$ insufficient capacity $=$ partial
conversion far below mastery (arm D at 1.7B: 24.3\% vs.\ 10.7\% no-signal at the
140-game endpoint, a gap partly attributable to the cross-normalization
difference noted above); all three satisfied $=$ parity (rescued arm at 4B ALFWorld). The
gain regime requires switching channels (\S\ref{sec:channel}).
Figure~\ref{fig:csweep} plots the coverage sweep at its formal endpoints.

\begin{figure}[h]
  \centering
  \includegraphics[width=0.55\linewidth]{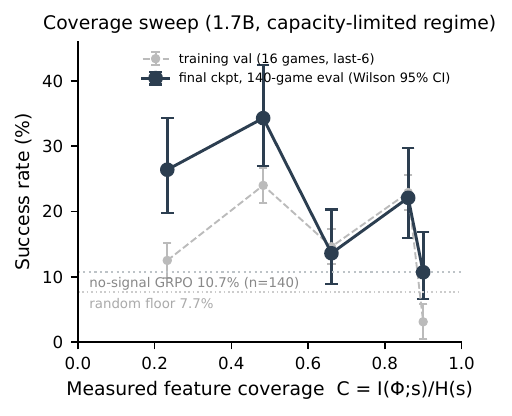}
  \caption{Coverage sweep at measured $C$ (greedy mask ladder), all five rungs at the
  140-game endpoint protocol (Wilson 95\% CIs): 26.4 / \textbf{34.3} / 13.6 / 22.1 /
  10.7\%, against the no-signal GRPO arm's 10.7\% on the same 140 games (dotted line;
  the gray dashed curve is the training-time 16-game series, whose state-sampling
  error reached $+13.9$ points). The endpoints tell a two-regime story. Left: coverage
  buys success; the peak rung's interval [27.0, 42.5] is fully separated from the
  no-signal reference (a 23.6-point gap: interval-separated yet under the
  ${\ge}25$-point categorical bar of \S\ref{sec:e01}, and cross-normalization
  per the caveat below). Right extreme: at full coverage the prediction target itself
  overwhelms 1.7B capacity: prediction accuracy plateaus at 0.70--0.80 far below
  ceiling, terminal entropy drifts to the random-policy level (0.44), and success
  returns to the no-signal level (10.7\%), the coverage benefit fully
  consumed, under mean-only normalization, i.e., independently of the std
  amplification pathology. The middle rung's dip (13.6) overlaps its neighbors'
  intervals and we leave the mid-coverage shape unresolved. All rungs single-seed.
  One design caveat: the no-signal reference is std-normalized while the coverage
  arms are mean-only, so the gaps fold in a normalization difference. On ALFWorld
  the same normalizer change measures $+3.1$ at training caliber but $+13.8$ at
  the formal endpoint (\S\ref{sec:rescue}), so this component cannot be assumed
  small, and the interval separations above license only arm-versus-arm
  differences, not their attribution to coverage alone; a mean-only no-signal arm
  would separate this cleanly and is left to future work.}
  \label{fig:csweep}
\end{figure}

\subsection{Formal endpoint evaluation}
\label{sec:e01}

Last-6 training-caliber means (32-game sampled validation, binomial noise
${\approx}\pm 8.8$pt; 16-game for HiddenRule-Gym, ${\approx}\pm 12.5$pt) are
adequate for shape and large qualitative gaps but not for cross-arm ordering.
We therefore re-evaluate the final
checkpoint of every arm entering a cross-arm ordering claim (the arms of
Table~\ref{tab:e01}, all 11 HiddenRule-Gym arms, and the WebShop arms; the
mechanism battery keeps training-caliber endpoints, \S\ref{sec:limitations}, and
its arm-to-arm statements are deliberately not treated as formal orderings)
under a single protocol: the full 140-game
\texttt{valid\_seen} split and the 134-game \texttt{valid\_unseen} split (padded
to 140 episodes by the fixed-batch loader, six games run twice), same
decoding (sampled, temperature 0.4), same hardware, one pass per arm
(Table~\ref{tab:e01}). All 11 HiddenRule-Gym arms get the same treatment
(140 games on the held-out probe rule families, fixed seed); corrections there
reach $+13.9$ points, twice revising a verdict (arm D, \S\ref{sec:hrg}; the
full-coverage rung, Fig.~\ref{fig:csweep}), and Figure~\ref{fig:csweep} plots the sweep
with its formal intervals. WebShop arms use the analogous 200-goal protocol
(test goals 0--199, one episode each; formal numbers appear in the text and
Appendix~\ref{app:inventory}, training-caliber numbers in Table~\ref{tab:webshop}).

Four things survive formalization, one is revised, and one weakens.
(i)~The channel dichotomy sharpens. From the 44.1\% baseline three-seed mean,
the compute-matched control sits at baseline (48.6, a $+4.5$ point-estimate
difference inside the interval). The auxiliary gain against its
\emph{matched-normalization} baseline is $\mathbf{+15.4}$ (gold three-seed formal mean
59.5; on the matched seed-0 pairing the gap is categorical by the caption's own
bar: 68.6 vs.\ 42.9, $\mathbf{+25.7}$, intervals disjoint), and the content attribution runs entirely within that matched machinery:
under the identical std-plus-auxiliary recipe, the content-free placebos sit at
or above the gold mean (67.9--87.1 vs.\ 59.5; the seed-paired form of this
ordering is given in (ii) below). A separate, cross-normalization
observation: fixing the normalizer alone, with no signal of any kind, reaches
the same level (mean-only no-signal, 57.9), so the auxiliary pass is not the
only road to ${\sim}58$; we flag this convergence as descriptive, not a matched
contrast (the arms differ in normalizer, exactly the comparison this section
otherwise disallows), and the combined cell (mean-only \emph{plus} auxiliary)
is untested. The separation runs along the delivery machinery; the supervision
content contributes nothing positive. Meanwhile no reward-channel variant sits
measurably above its matched-normalization baseline; several point estimates
sit below it (self-report $-5.5$, $\Delta$acc $-12.0$, decoupled $-16.2$
against the 44.1\% mean), though these gaps stay under the caption's categorical
threshold with intervals overlapping the baseline's. The collapsed arms are
confirmed at or statistically indistinguishable from literal zero (upper CI
bounds $\leq$3.9\%). Pairings are stated per comparison throughout this section:
gaps in this paragraph use the three-seed baseline mean (44.1), the seed-0
ladder in (ii) uses the matched seed-0 baseline (42.9). (ii)~The content-free attribution is the study's most consistent formal
pattern, stated within scale: at 4B the shuffled-gold s0 placebo (81.4) and
random-vocabulary arm (87.1) sit above every true-gold seed (68.6/57.9/52.1),
while the shuffled s42 and random-token endpoints (67.9) do not; the surviving
statement is a descriptive, seed-paired class ordering, not a categorical
claim (full ladder restatement and calibration notes in
Appendix~\ref{app:e01-detail}). (iii)~The mean-only rescue and its no-signal control stay within overlapping
intervals (52.9 vs.\ 57.9, a 5.0-point gap at point estimate, the same
evidentiary grade as the self-report gap), preserving the net-zero attribution
of the cleaned reward channel.
(iv)~The compute control stays at baseline (48.6; $+4.5$ at point estimate,
inside the interval, the same evidentiary grade as the negative point-estimate
gaps above). The revision: the 8B baseline evaluates at \textbf{45.0\%} [37.0, 53.3], inside the
4B baseline seed band, so the training-time anomaly (32.8\% $<$ 4B) was
substantially a 32-game-window artifact, and scale-ordering caveats tied to it are
lifted. Family-arm gains restate on formal numbers against the 45.0 baseline (gold escape
seed: $+40.0$; shuffled: $+34.3$; both intervals disjoint from the baseline's). \paragraph{Out of distribution and the largest revision.}
On the 134-game unseen split every headline structure survives and every
magnitude compresses (collapsed arms literal zero, upper bounds 2.7\%;
auxiliary gain $+15.4 \to +7.9$; four near-baseline gaps flip sign within
noise); the largest downward revision is self-report ($49.5 \to 38.6$), the
reason single-seed rows carry a cross-seed caveat. Full out-of-distribution
and calibration prose is in Appendix~\ref{app:e01-detail}.

\begin{table}[H]
\centering
\scriptsize
\begin{tabular}{llcccc}
\toprule
arm & channel / role & seen (140) & [95\% CI] & unseen (134${\to}$140) & last-6 \\
\midrule
\multicolumn{6}{l}{\emph{1.7B}} \\
baseline & --- & 17.9 & [12.4, 25.1] & 18.6 & 20.9 \\
std prediction reward & reward & 0.0 & [0.0, 2.7] & 0.0 & 0.0 \\
\midrule
\multicolumn{6}{l}{\emph{4B}} \\
baseline (s0) & --- & 42.9 & [35.0, 51.2] & 42.1 & 49.5 \\
baseline (s42) & --- & 40.7 & [32.9, 49.0] & 37.1 & 39.1 \\
baseline (s96) & --- & 48.6 & [40.5, 56.8] & 46.4 & 51.6 \\
aux loss, gold (s0) & loss & 68.6 & [60.5, 75.7] & 65.0 & 69.3 \\
aux loss, gold (s42) & loss & 57.9 & [49.6, 65.8] & 45.0 & 63.5 \\
aux loss, gold (s96) & loss & 52.1 & [43.9, 60.2] & 39.3 & 50.0 \\
aux loss, shuffled gold (s0) & loss placebo & \textbf{81.4} & [74.1, 87.0] & \textbf{73.6} & 76.0 \\
aux loss, shuffled (s42) & loss placebo & 67.9 & [59.8, 75.1] & 66.4 & 60.4 \\
aux loss, random vocab (s0) & loss placebo & \textbf{87.1} & [80.5, 91.7] & 71.4 & 73.4 \\
aux loss, random tokens (s0) & loss placebo & 67.9 & [59.8, 75.1] & 57.9 & 61.5 \\
2-epoch compute control & control & 48.6 & [40.5, 56.8] & 39.3 & 48.9 \\
std prediction reward & reward & 0.7 & [0.1, 3.9] & 0.0 & 1.0 \\
mean-only rescue & reward & 52.9 & [44.7, 61.0] & 40.7 & 51.6 \\
mean-only, no signal & control & 57.9 & [49.6, 65.8] & 44.3 & 52.6 \\
all-fail-group filter & reward & 40.7 & [32.9, 49.0] & 35.7 & 39.6 \\
decoupled (capped) reward & reward & 27.9 & [21.1, 35.8] & 30.0 & 31.2 \\
$\Delta$-accuracy reward & reward & 32.1 & [24.9, 40.2] & 29.3 & 28.1 \\
anchor-QA (recall) reward & reward & 44.3 & [36.3, 52.6] & 33.6 & 44.3 \\
self-report reward & reward & 38.6 & [30.9, 46.9] & 39.3 & 49.5 \\
\midrule
\multicolumn{6}{l}{\emph{8B}} \\
baseline & --- & 45.0 & [37.0, 53.3] & 39.3 & 32.8 \\
aux loss, gold (s0, locked) & loss & 0.0 & [0.0, 2.7] & 0.0 & 0.0 \\
aux loss, gold (s42, escaped) & loss & \textbf{85.0} & [78.2, 90.0] & 72.1 & 72.4 \\
aux loss, shuffled & loss & 79.3 & [71.9, 85.2] & 66.4 & 78.6 \\
prompt-only & --- & 52.1 & [43.9, 60.2] & 44.3 & 47.4 \\
std prediction reward & reward & 0.0 & [0.0, 2.7] & 0.0 & 0.0 \\
\midrule
\multicolumn{6}{l}{\emph{4B, late-finishing}} \\
std prediction reward (s42) & reward & 0.0 & [0.0, 2.7] & 0.0 & 0.0 \\
decoupled normalization (s42) & reward & 32.1 & [24.9, 40.2] & 26.4 & 42.2 \\
\bottomrule
\end{tabular}
\caption{Unified formal evaluation (E01): final checkpoints, 140-game
\texttt{valid\_seen}, sampled decoding at temperature 0.4, Wilson 95\% intervals
(computed by the evaluation driver from the logged three-decimal success rate at
$n{=}140$; recomputing from raw counts can shift a bound's last digit by
0.1pt).
Intervals are within-arm binomial only; they do \emph{not} include seed variance,
which the multi-seed arms show spans 4--17 points peak-to-peak among the healthy
4B arms (and 85 points in the bistable 8B gold family); single-seed rows
(including both table-topping placebos) carry that additional cross-seed uncertainty
(\S\ref{sec:limitations}). With 29 arms, uncorrected pairwise comparisons invite
multiplicity concerns; the categorical statements in the text rest only on gaps of
${\ge}25$ points with non-overlapping intervals, which survive any standard
correction, and fine-grained rankings among adjacent arms are deliberately not
interpreted. Training-time last-6 means shown for continuity with the
curves in the text; prose retains last-6 numbers where it describes training dynamics.
The unseen column is the full 134-game \texttt{valid\_unseen} split evaluated
under the same fixed-batch protocol, which pads it to 140 episodes (six games
run twice); percentages and intervals are over the 140 episodes, so the upper
bound at zero is 2.7\%. Formal numbers
for the estimator controls, the 8B gold s96 and half-weight arms, the $\Delta$acc
second seed, and the remaining placebo seed
replications (shuffled s96; random-vocabulary s42/s96) were pending at the data
freeze; other late-finishing family arms are included above. The anchor-QA arm's
training last-6 and formal endpoint coincide at 44.3 by arithmetic accident; the
underlying counts differ.}
\label{tab:e01}
\end{table}

\subsection{Practical guidance}
\label{sec:guidance}

In brief (full prescriptions with numbers in Appendix~\ref{app:guidance-ext}):
(1)~do not mix difference-form self-prediction shaping into GRPO's
std-normalized reward when within-group task-return variance is small relative
to the shaping term; (2)~if reward-channel shaping is unavoidable, use
mean-only normalization (decoupling under-performs; annealing rescues only
inside the honeymoon window); (3)~prefer signals whose within-group variance
decays, and verify the decay empirically ($\Delta$acc is the cautionary test);
(4)~for the gain, add an auxiliary CE pass \emph{as a regularizer}: any
well-formed target works, do not pay for gold supervision, and treat scale as
hazardous (at 8B run a seed pair and consider halving the weight); (5)~monitor
the joint precursor, never entropy alone; (6)~read feature-set difficulty off
the prediction-saturation form before spending compute.

\subsection{Limitations}
\label{sec:limitations}

\paragraph{Seeds and statistics.}
The headline auxiliary-loss arm, its two leading placebos, and the ALFWorld
baseline carry three seeds; the WebShop pair, decoupled, and $\Delta$acc carry
two; most mechanism arms are single-seed, so cross-arm gaps inherit partially
quantified seed variance (healthy-4B spread 4--17 points peak-to-peak; 85 in
the bistable 8B gold family, whose six occupancy samples support a structural
description, not a rate estimate). Every arm entering a cross-arm ordering
claim has a formal 140-game endpoint and the headline gaps survive it; the
mechanism battery and the caption's pending cells keep training-caliber
endpoints only, and the out-of-distribution pass (ALFWorld only) preserves
every headline structure while magnitudes compress
(\S\ref{sec:e01}).

\paragraph{Scope.}
Three environments carry results (ALFWorld, WebShop, HiddenRule-Gym); a fourth,
ScienceWorld, appears in the arm inventory (Appendix~\ref{app:inventory}) as in-flight arms only
(one running, one queued) and contributes no interpreted result. The WebShop shuffled placebo (70.3) and the
mean-only rescue of the instant-suppression form (59.4) are completed, so
attribution and prescription carry cross-environment evidence. Collapse-timing
observations rest on three scale points, and HRG conclusions are
capacity-limited at 1.7B (the 4B capacity gate shows lift-off without rule
learning). The 8B baseline anomaly is resolved by the unified evaluation (the
training-time 32.8\% evaluates at 45.0\% over 140 games, inside the 4B baseline
seed band), so scale-ordering claims are stated on formal numbers only
(\S\ref{sec:e01}). All results are on a single base-model family (Qwen3, three
sizes) with one agent framework and one prediction-signal design (the
inventory's Qwen2.5-1.5B row is an early full-budget pipeline anchor, not a
comparison arm); the mechanism is architecture-agnostic algebra, but its
empirical constants (collapse timing, honeymoon peaks, gain magnitudes) should
be read as Qwen3-specific until replicated elsewhere. Estimator scope is four
cells wide: GiGPO reproduces the collapse, PPO/GAE and RLOO survive the
identical recipe, and collapse aligns with the within-group $\sigma$ division
across all four (\S\ref{sec:generalization}); the last cell, global-batch
whitening (REINFORCE++), completed after the main data freeze and survives the
identical recipe (48.4\% last-6 against its own 55.2\% baseline arm, no
collapse window, no terminal triple), localizing the warning to the
within-group $\sigma$ division rather than std normalization per se
(training-caliber, single-seed, post-freeze: reported as a boundary note, not
entered into the formal ordering tables).

\paragraph{Resolved since the previous version.}
The small-group-$\hat\sigma$ collinearity concern and the single-environment
concern were both closed by preregistered ablations (group sizes $8/16$;
WebShop replication); detail in Appendix~\ref{app:env-detail}.

\section{Conclusion}
\label{sec:conclusion}

Dense prediction rewards are a natural and popular remedy for sparse-reward LLM
agents, and under group-normalized RL they destroy the policy: at every scale,
group size, and fixed shaping coefficient we tested, std normalization converts
a bounded, telescoping, verifier-graded signal into full-scale pressure toward
predictability, and the optimizer builds a dark room. The mechanism is a
one-line algebraic fact with three-point empirical confirmation; the danger is
governed by the signal's within-group variance trajectory; the only
interventions that work are structural and preventive; and when the same signal
does help, through an auxiliary loss, placebo controls trace the gain to the
update rather than the world model. Both halves come with boundaries. The
reward-channel failure and the content-free attribution replicate on a second
environment, where task difficulty selects the failure's form. At 8B the
auxiliary channel becomes seed-determined: gold full-weight locks two of three
seeds and sends the third to the study's best gold-signal result, while every
content-free or reduced-weight arm stays healthy. Scale moves the randomness
from collapse timing to collapse occurrence, and moves content from inert to
risk-implicated: the gain is content-free everywhere, but the one 8B
configuration that locks is the true-gold signal at full weight (a structural
reading from a six-run family, \S\ref{sec:limitations}). We report the
mechanism, the criterion, the early-warning signature, and the ledger of
preregistered predictions so the next design iteration of dense supervision in
agent RL can be checked against them.

\subsubsection*{Reproducibility Statement}
All experiments use a single aligned configuration across scales (train batch 8 $\times$
group 4, validation 32 episodes at temperature 0.4 (16 for HiddenRule-Gym at
1.7B; the 4B HRG capacity arm used 32), seed 0 unless stated;
Appendix~\ref{app:config}). Training-dynamics numbers are last-6 validation means from
training-time evaluation; formal endpoint numbers come from the unified 140-game
evaluation of every arm entering a cross-arm ordering claim (\S\ref{sec:e01},
Table~\ref{tab:e01}); the mechanism battery keeps training-caliber endpoints
(\S\ref{sec:limitations}). Seed replication (seeds 42/96) covers the headline
auxiliary-loss arm, its two leading placebo controls, the ALFWorld baseline, and
the 8B gold arm, with second seeds also on the decoupled, $\Delta$acc, and
WebShop auxiliary arms; the random-token placebo and the 8B shuffled,
prompt-only, and
half-weight arms are single-seed (marked in Table~\ref{tab:e01} and
Appendix~\ref{app:inventory});
Appendix~\ref{app:inventory} inventories all 74 training arms of the study with their
endpoints; both evaluation splits
(140-game seen, 134-game unseen) are complete for every arm in
Table~\ref{tab:e01}, and formal evaluation of late-finishing arms was pending at
freeze time.
The agent framework builds on the open-source verl-agent stack. Our code is
available at \url{https://github.com/RobertWangWang/verl-agent} (a fork of
\url{https://github.com/langfengQ/verl-agent}): the reward pipeline, the auxiliary-CE
channel with all placebo modes, HiddenRule-Gym, the WebShop and ScienceWorld service
harnesses, run recipes, unit tests, and the preregistration digest file whose git
history timestamps every registered prediction.

\bibliography{refs}

\begin{thebibliography}{66}
\providecommand{\natexlab}[1]{#1}
\providecommand{\url}[1]{\texttt{#1}}
\expandafter\ifx\csname urlstyle\endcsname\relax
  \providecommand{\doi}[1]{doi: #1}\else
  \providecommand{\doi}{doi: \begingroup \urlstyle{rm}\Url}\fi

\bibitem[Baltieri \& Buckley(2019)Baltieri and Buckley]{baltieri2019darkroom}
Manuel Baltieri and Christopher~L. Buckley.
\newblock The dark room problem in predictive processing and active inference,
  a legacy of cognitivism?
\newblock In \emph{Artificial Life Conference Proceedings}, 2019.

\bibitem[Bay \& Yearick(2026)Bay and Yearick]{bay2026identity}
Yong~Yi Bay and Kathleen~A. Yearick.
\newblock {GRPO}, {Dr.GRPO}, and {DAPO} are three operations on one number: The
  group-standard-deviation identity.
\newblock \emph{arXiv preprint arXiv:2607.00152}, 2026.

\bibitem[Behboudian et~al.(2021)Behboudian, Satsangi, Taylor, Harutyunyan, and
  Bowling]{behboudian2021pies}
Paniz Behboudian, Yash Satsangi, Matthew~E. Taylor, Anna Harutyunyan, and
  Michael Bowling.
\newblock Policy invariant explicit shaping: an efficient alternative to reward
  shaping.
\newblock \emph{Neural Computing and Applications}, 2021.

\bibitem[Bereket \& Leskovec(2025)Bereket and
  Leskovec]{bereket2025uncalibrated}
Michael Bereket and Jure Leskovec.
\newblock Uncalibrated reasoning: {GRPO} induces overconfidence for stochastic
  outcomes.
\newblock \emph{arXiv preprint arXiv:2508.11800}, 2025.

\bibitem[Burda et~al.(2019)Burda, Edwards, Pathak, Storkey, Darrell, and
  Efros]{burda2019curiosity}
Yuri Burda, Harrison Edwards, Deepak Pathak, Amos Storkey, Trevor Darrell, and
  Alexei~A. Efros.
\newblock Large-scale study of curiosity-driven learning.
\newblock In \emph{ICLR}, 2019.

\bibitem[Champion et~al.(2024)Champion, Grze{\'s}, Bonheme, and
  Bowman]{champion2024deconstructing}
Th{\'e}ophile Champion, Marek Grze{\'s}, Lisa Bonheme, and Howard Bowman.
\newblock Deconstructing deep active inference: a contrarian information
  gatherer.
\newblock \emph{Neural Computation}, 2024.

\bibitem[Cui et~al.(2025{\natexlab{a}})]{cui2025entropy}
Ganqu Cui et~al.
\newblock The entropy mechanism of reinforcement learning for reasoning
  language models.
\newblock \emph{arXiv preprint arXiv:2505.22617}, 2025{\natexlab{a}}.

\bibitem[Cui et~al.(2025{\natexlab{b}})]{cui2025prime}
Ganqu Cui et~al.
\newblock Process reinforcement through implicit rewards.
\newblock \emph{arXiv preprint arXiv:2502.01456}, 2025{\natexlab{b}}.

\bibitem[{DeepSeek-AI}(2025)]{guo2025deepseekr1}
{DeepSeek-AI}.
\newblock {DeepSeek-R1}: Incentivizing reasoning capability in {LLMs} via
  reinforcement learning.
\newblock \emph{arXiv preprint arXiv:2501.12948}, 2025.

\bibitem[Devlin \& Kudenko(2012)Devlin and Kudenko]{devlin2012dynamic}
Sam Devlin and Daniel Kudenko.
\newblock Dynamic potential-based reward shaping.
\newblock In \emph{AAMAS}, 2012.

\bibitem[Dodge et~al.(2020)Dodge, Ilharco, Schwartz, Farhadi, Hajishirzi, and
  Smith]{dodge2020finetuning}
Jesse Dodge, Gabriel Ilharco, Roy Schwartz, Ali Farhadi, Hannaneh Hajishirzi,
  and Noah Smith.
\newblock Fine-tuning pretrained language models: Weight initializations, data
  orders, and early stopping.
\newblock \emph{arXiv preprint arXiv:2002.06305}, 2020.

\bibitem[Feng et~al.(2025)Feng, Xue, Liu, and An]{feng2025gigpo}
Lang Feng, Zhenghai Xue, Tingcong Liu, and Bo~An.
\newblock Group-in-group policy optimization for {LLM} agent training.
\newblock In \emph{NeurIPS}, 2025.

\bibitem[Forbes et~al.(2024)Forbes, Villalobos-Arias, Xu, Potts, Jhala, and
  Roberts]{forbes2024pbim}
Grant~C. Forbes, Leonardo Villalobos-Arias, Jianxun Xu, Colin~M. Potts, Arnav
  Jhala, and David~L. Roberts.
\newblock Potential-based intrinsic motivation: Preserving optimality with
  complex, non-{Markovian} shaping rewards.
\newblock In \emph{arXiv preprint arXiv:2402.07411}, 2024.

\bibitem[Friston et~al.(2012)Friston, Thornton, and Clark]{friston2012darkroom}
Karl Friston, Christopher Thornton, and Andy Clark.
\newblock Free-energy minimization and the dark-room problem.
\newblock \emph{Frontiers in Psychology}, 3:\penalty0 130, 2012.

\bibitem[Fu et~al.(2025)]{fu2025par}
Jiayi Fu et~al.
\newblock Reward shaping to mitigate reward hacking in {RLHF}.
\newblock \emph{arXiv preprint arXiv:2502.18770}, 2025.

\bibitem[Ge et~al.(2026)]{ge2026curvature}
Cheng Ge et~al.
\newblock Why {GRPO} needs normalization: A local-curvature perspective on
  adaptive gradients.
\newblock \emph{arXiv preprint}, 2026.

\bibitem[Harutyunyan et~al.(2015)Harutyunyan, Devlin, Vrancx, and
  Now{\'e}]{harutyunyan2015dpba}
Anna Harutyunyan, Sam Devlin, Peter Vrancx, and Ann Now{\'e}.
\newblock Expressing arbitrary reward functions as potential-based advice.
\newblock In \emph{AAAI}, 2015.

\bibitem[He et~al.(2026)He, Feng, Wei, Cheng, Feng, and An]{he2026hgpo}
Shuo He, Lang Feng, Qi~Wei, Xin Cheng, Lei Feng, and Bo~An.
\newblock Hierarchy-of-groups policy optimization for long-horizon agentic
  tasks.
\newblock In \emph{International Conference on Learning Representations
  (ICLR)}, 2026.

\bibitem[Hou et~al.(2025)]{hou2025lpm}
Zhibo Hou et~al.
\newblock Beyond noisy-{TVs}: Noise-robust exploration via learning progress
  monitoring.
\newblock \emph{arXiv preprint}, 2025.

\bibitem[Hu et~al.(2025)]{hu2025reinforcepp}
Jian Hu et~al.
\newblock {REINFORCE++}: Stabilizing critic-free policy optimization with
  global advantage normalization.
\newblock \emph{arXiv preprint arXiv:2501.03262}, 2025.

\bibitem[Ichihara et~al.(2025)]{mogrpo2025}
Yuki Ichihara et~al.
\newblock {MO-GRPO}: Mitigating reward hacking of group relative policy
  optimization on multi-objective problems.
\newblock \emph{arXiv preprint arXiv:2509.22047}, 2025.

\bibitem[Jaderberg et~al.(2017)Jaderberg, Mnih, Czarnecki, Schaul, Leibo,
  Silver, and Kavukcuoglu]{jaderberg2017unreal}
Max Jaderberg, Volodymyr Mnih, Wojciech~Marian Czarnecki, Tom Schaul, Joel~Z.
  Leibo, David Silver, and Koray Kavukcuoglu.
\newblock Reinforcement learning with unsupervised auxiliary tasks.
\newblock In \emph{ICLR}, 2017.

\bibitem[Jiang et~al.(2023)Jiang, Chen, Pan, Wang, Dapeng, Jiang, and
  Long]{jiang2023forkmerge}
Junguang Jiang, Baixu Chen, Junwei Pan, Ximei Wang, Liu Dapeng, Jie Jiang, and
  Mingsheng Long.
\newblock {ForkMerge}: Mitigating negative transfer in auxiliary-task learning.
\newblock In \emph{Advances in Neural Information Processing Systems}, 2023.

\bibitem[Leng et~al.(2025)]{leng2025mmr1}
Sicong Leng et~al.
\newblock {MMR1}: Enhancing multimodal reasoning with variance-aware sampling
  and open resources.
\newblock \emph{arXiv preprint}, 2025.

\bibitem[Liu et~al.(2026)]{liu2026gdpo}
Shih-Yang Liu et~al.
\newblock {GDPO}: Group reward-decoupled normalization policy optimization for
  multi-reward {RL} optimization.
\newblock \emph{arXiv preprint arXiv:2601.05242}, 2026.

\bibitem[Liu et~al.(2025)Liu, Chen, Li, Qi, Pang, Du, Lee, and
  Lin]{liu2025drgrpo}
Zichen Liu, Changyu Chen, Wenjun Li, Penghui Qi, Tianyu Pang, Chao Du, Wee~Sun
  Lee, and Min Lin.
\newblock Understanding {R1-Zero}-like training: A critical perspective.
\newblock \emph{arXiv preprint arXiv:2503.20783}, 2025.

\bibitem[Lu et~al.(2026)]{lu2026paw}
Ning Lu et~al.
\newblock Policy and world modeling co-training for language agents.
\newblock \emph{arXiv preprint arXiv:2606.02388}, 2026.

\bibitem[Mavor-Parker et~al.(2022)Mavor-Parker, Young, Barry, and
  Griffin]{mavorparker2022ama}
Augustine~N. Mavor-Parker, Kimberly~A. Young, Caswell Barry, and Lewis~D.
  Griffin.
\newblock How to stay curious while avoiding noisy {TVs} using aleatoric
  uncertainty estimation.
\newblock In \emph{ICML}, 2022.

\bibitem[Min et~al.(2022)Min, Lyu, Holtzman, Artetxe, Lewis, Hajishirzi, and
  Zettlemoyer]{min2022rethinking}
Sewon Min, Xinxi Lyu, Ari Holtzman, Mikel Artetxe, Mike Lewis, Hannaneh
  Hajishirzi, and Luke Zettlemoyer.
\newblock Rethinking the role of demonstrations: What makes in-context learning
  work?
\newblock \emph{EMNLP}, 2022.

\bibitem[Mosbach et~al.(2021)Mosbach, Andriushchenko, and
  Klakow]{mosbach2021stability}
Marius Mosbach, Maksym Andriushchenko, and Dietrich Klakow.
\newblock On the stability of fine-tuning {BERT}: Misconceptions, explanations,
  and strong baselines.
\newblock In \emph{International Conference on Learning Representations}, 2021.

\bibitem[Mroueh(2025)]{mroueh2025grpo}
Youssef Mroueh.
\newblock Reinforcement learning with verifiable rewards: {GRPO}'s effective
  loss, dynamics, and success amplification.
\newblock \emph{arXiv preprint arXiv:2503.06639}, 2025.

\bibitem[Ng et~al.(1999)Ng, Harada, and Russell]{ng1999policy}
Andrew~Y. Ng, Daishi Harada, and Stuart Russell.
\newblock Policy invariance under reward transformations: Theory and
  application to reward shaping.
\newblock In \emph{ICML}, 1999.

\bibitem[Pan et~al.(2022)Pan, Bhatia, and Steinhardt]{pan2022misspecification}
Alexander Pan, Kush Bhatia, and Jacob Steinhardt.
\newblock The effects of reward misspecification: Mapping and mitigating
  misaligned models.
\newblock In \emph{ICLR}, 2022.

\bibitem[Park et~al.(2025{\natexlab{a}})]{park2025cliplow}
Jaesung Park et~al.
\newblock Clip-low increases entropy and clip-high decreases entropy in
  reinforcement learning of large language models.
\newblock \emph{arXiv preprint}, 2025{\natexlab{a}}.

\bibitem[Park et~al.(2025{\natexlab{b}})Park, Yang, Lee, Chang, and
  Zhang]{park2025s2d}
Junseok Park, Hyeonseo Yang, Min~Whoo Lee, Won-Seok Chang, and Byoung-Tak
  Zhang.
\newblock From sparse to dense: Toddler-inspired reward transition in
  goal-oriented reinforcement learning.
\newblock \emph{IEEE Transactions on Cognitive and Developmental Systems},
  2025{\natexlab{b}}.

\bibitem[Pathak et~al.(2017)Pathak, Agrawal, Efros, and
  Darrell]{pathak2017curiosity}
Deepak Pathak, Pulkit Agrawal, Alexei~A. Efros, and Trevor Darrell.
\newblock Curiosity-driven exploration by self-supervised prediction.
\newblock In \emph{ICML}, 2017.

\bibitem[Seth et~al.(2020)Seth, Millidge, Buckley, and
  Tschantz]{seth2020curious}
Anil~K. Seth, Beren Millidge, Christopher~L. Buckley, and Alexander Tschantz.
\newblock Curious inferences: Reply to {Sun} and {Firestone} on the dark room
  problem.
\newblock \emph{Trends in Cognitive Sciences}, 2020.

\bibitem[Setlur et~al.(2025)Setlur, Nagpal, Fisch, Geng, Eisenstein, Agarwal,
  Agarwal, Berant, and Kumar]{setlur2024pav}
Amrith Setlur, Chirag Nagpal, Adam Fisch, Xinyang Geng, Jacob Eisenstein,
  Rishabh Agarwal, Alekh Agarwal, Jonathan Berant, and Aviral Kumar.
\newblock Rewarding progress: Scaling automated process verifiers for {LLM}
  reasoning.
\newblock In \emph{ICLR}, 2025.

\bibitem[Shao et~al.(2024)Shao, Wang, Zhu, Xu, Song, Bi, Zhang, Zhang, Li, Wu,
  and Guo]{shao2024grpo}
Zhihong Shao, Peiyi Wang, Qihao Zhu, Runxin Xu, Junxiao Song, Xiao Bi, Haowei
  Zhang, Mingchuan Zhang, Y.~K. Li, Y.~Wu, and Daya Guo.
\newblock {DeepSeekMath}: Pushing the limits of mathematical reasoning in open
  language models.
\newblock \emph{arXiv preprint arXiv:2402.03300}, 2024.

\bibitem[Shridhar et~al.(2021)Shridhar, Yuan, C{\^o}t{\'e}, Bisk, Trischler,
  and Hausknecht]{shridhar2021alfworld}
Mohit Shridhar, Xingdi Yuan, Marc-Alexandre C{\^o}t{\'e}, Yonatan Bisk, Adam
  Trischler, and Matthew Hausknecht.
\newblock {ALFWorld}: Aligning text and embodied environments for interactive
  learning.
\newblock In \emph{International Conference on Learning Representations
  (ICLR)}, 2021.

\bibitem[Shrivastava et~al.(2026)Shrivastava, Kauffmann, Awadallah, and
  Papailiopoulos]{shrivastava2026echo}
Vaishnavi Shrivastava, Piero Kauffmann, Ahmed Awadallah, and Dimitris
  Papailiopoulos.
\newblock {ECHO}: Terminal agents learn world models for free.
\newblock \emph{arXiv preprint arXiv:2605.24517}, 2026.

\bibitem[Tan et~al.(2026)]{tan2026hcapo}
Huihan Tan et~al.
\newblock Hindsight credit assignment for long-horizon {LLM} agents.
\newblock \emph{arXiv preprint}, 2026.

\bibitem[Tao et~al.(2025)]{tao2025hero}
Leitian Tao et~al.
\newblock Hybrid reinforcement: When reward is sparse, it's better to be dense.
\newblock \emph{arXiv preprint}, 2025.

\bibitem[Voelcker et~al.(2024)Voelcker, Kastner, Gilitschenski, and
  Farahmand]{voelcker2024selfprediction}
Claas Voelcker, Tyler Kastner, Igor Gilitschenski, and Amir-massoud Farahmand.
\newblock When does self-prediction help? {Understanding} auxiliary tasks in
  reinforcement learning.
\newblock \emph{arXiv preprint arXiv:2406.17718}, 2024.

\bibitem[Wang et~al.(2025{\natexlab{a}})]{wang2025igpo}
Guoqing Wang et~al.
\newblock Information gain-based policy optimization: A simple and effective
  approach for multi-turn {LLM} agents.
\newblock \emph{arXiv preprint}, 2025{\natexlab{a}}.

\bibitem[Wang et~al.(2025{\natexlab{b}})]{wang2025vagen}
Kangrui Wang et~al.
\newblock {VAGEN}: Reinforcing world model reasoning for multi-turn {VLM}
  agents.
\newblock \emph{arXiv preprint}, 2025{\natexlab{b}}.

\bibitem[Wang et~al.(2025{\natexlab{c}})]{wang2025practitioner}
Ruiyi Wang et~al.
\newblock A practitioner's guide to multi-turn agentic reinforcement learning.
\newblock \emph{arXiv preprint}, 2025{\natexlab{c}}.

\bibitem[Wang et~al.(2026{\natexlab{a}})]{wang2026hackingsurvey}
Xiaohua Wang et~al.
\newblock Reward hacking in the era of large models: Mechanisms, emergent
  misalignment, challenges.
\newblock \emph{arXiv preprint}, 2026{\natexlab{a}}.

\bibitem[Wang et~al.(2025{\natexlab{d}})Wang, Li, Zang, Zhou, Bu, Wang, Lu,
  Jin, and Wang]{prefgrpo2025}
Yibin Wang, Zhimin Li, Yuhang Zang, Yujie Zhou, Jiazi Bu, Chunyu Wang, Qinglin
  Lu, Cheng Jin, and Jiaqi Wang.
\newblock {Pref-GRPO}: Pairwise preference reward-based {GRPO} for stable
  text-to-image reinforcement learning.
\newblock \emph{arXiv preprint arXiv:2508.20751}, 2025{\natexlab{d}}.

\bibitem[Wang et~al.(2025{\natexlab{e}})]{wang2025ragen}
Zihan Wang et~al.
\newblock {RAGEN}: Understanding self-evolution in {LLM} agents via multi-turn
  reinforcement learning.
\newblock \emph{arXiv preprint arXiv:2504.20073}, 2025{\natexlab{e}}.

\bibitem[Wang et~al.(2026{\natexlab{b}})]{wang2026beacon}
Zixuan Wang et~al.
\newblock Milestone-guided policy learning for long-horizon language agents.
\newblock \emph{arXiv preprint}, 2026{\natexlab{b}}.

\bibitem[Wortsman et~al.(2023)Wortsman, Liu, Xiao, Everett, Alemi,
  et~al.]{wortsman2023smallscale}
Mitchell Wortsman, Peter~J Liu, Lechao Xiao, Katie Everett, Alexander Alemi,
  et~al.
\newblock Small-scale proxies for large-scale transformer training
  instabilities.
\newblock \emph{arXiv preprint arXiv:2309.14322}, 2023.

\bibitem[Wu et~al.(2025)]{wu2025qae}
Junkang Wu et~al.
\newblock Quantile advantage estimation: Stabilizing {RLVR} for {LLM}
  reasoning.
\newblock \emph{arXiv preprint}, 2025.

\bibitem[Xi et~al.(2026)]{xi2025agentprm}
Zhiheng Xi et~al.
\newblock {AgentPRM}: Process reward models for {LLM} agents via step-wise
  promise and progress.
\newblock In \emph{Proceedings of the ACM Web Conference}, 2026.

\bibitem[Xiao et~al.(2025)]{xiao2025bnpo}
Changyi Xiao et~al.
\newblock {BNPO}: Beta normalization policy optimization.
\newblock \emph{arXiv preprint arXiv:2506.02864}, 2025.

\bibitem[Xu et~al.(2026)]{xu2026scpo}
Peng Xu et~al.
\newblock Semantic consistency policy optimization for reinforcement learning
  of {LLM} agents.
\newblock \emph{arXiv preprint}, 2026.

\bibitem[Xu \& Ding(2025)Xu and Ding]{xu2025spo}
Zhongwen Xu and Zihan Ding.
\newblock Single-stream policy optimization.
\newblock \emph{arXiv preprint}, 2025.

\bibitem[Yang et~al.(2026)]{yang2026biased}
Feng Yang et~al.
\newblock Your group-relative advantage is biased.
\newblock \emph{arXiv preprint}, 2026.

\bibitem[Yao et~al.(2022)Yao, Chen, Yang, and Narasimhan]{yao2022webshop}
Shunyu Yao, Howard Chen, John Yang, and Karthik Narasimhan.
\newblock {WebShop}: Towards scalable real-world web interaction with grounded
  language agents.
\newblock In \emph{Advances in Neural Information Processing Systems
  (NeurIPS)}, 2022.

\bibitem[Yu et~al.(2025)Yu, Zhang, Zhu, Yuan, Zuo, Yue, Fan, Liu, Liu, Liu,
  et~al.]{yu2025dapo}
Qiying Yu, Zheng Zhang, Ruofei Zhu, Yufeng Yuan, Xiaochen Zuo, Yu~Yue, Tiantian
  Fan, Gaohong Liu, Lingjun Liu, Xin Liu, et~al.
\newblock Dapo: An open-source llm reinforcement learning system at scale.
\newblock \emph{arXiv preprint arXiv:2503.14476}, 2025.

\bibitem[Yu et~al.(2026)]{yu2026rwml}
Xiao Yu et~al.
\newblock Reinforcement world model learning for {LLM}-based agents.
\newblock \emph{arXiv preprint arXiv:2602.05842}, 2026.

\bibitem[Zhang et~al.(2025{\natexlab{a}})Zhang, Wu, Zhu, et~al.]{zhang2025scaf}
Xichen Zhang, Sitong Wu, Yinghao Zhu, et~al.
\newblock Scaf-grpo: Scaffolded group relative policy optimization for
  enhancing llm reasoning.
\newblock \emph{arXiv preprint arXiv:2510.19807}, 2025{\natexlab{a}}.

\bibitem[Zhang et~al.(2026)]{zhang2026aero}
Zhi Zhang et~al.
\newblock Train less, learn more: Adaptive efficient rollout optimization for
  group-based reinforcement learning.
\newblock \emph{arXiv preprint}, 2026.

\bibitem[Zhang et~al.(2025{\natexlab{b}})]{zhang2025rlvmr}
Zijing Zhang et~al.
\newblock {RLVMR}: Reinforcement learning with verifiable meta-reasoning
  rewards for robust long-horizon agents.
\newblock \emph{arXiv preprint}, 2025{\natexlab{b}}.

\bibitem[Zhou et~al.(2024)Zhou, Zanette, Pan, Levine, and
  Kumar]{zhou2024archer}
Yifei Zhou, Andrea Zanette, Jiayi Pan, Sergey Levine, and Aviral Kumar.
\newblock {ArCHer}: Training language model agents via hierarchical multi-turn
  {RL}.
\newblock \emph{arXiv preprint arXiv:2402.19446}, 2024.

\bibitem[Zhu et~al.(2025)]{zhu2025stratified}
Mingkang Zhu et~al.
\newblock Stratified {GRPO}: Handling structural heterogeneity in reinforcement
  learning of {LLM} search agents.
\newblock \emph{arXiv preprint}, 2025.

\end{thebibliography}
\bibliographystyle{iclr2026_conference}

\appendix
\section{Registered Predictions Ledger}
\label{app:prereg}

Kept verbatim with outcomes.
Each preregistered prediction was committed as a full 64-character SHA256 digest of
its plaintext to a version-controlled file \emph{before} the corresponding run
completed; the digest file and all plaintexts ship with the code release, so every
``registered before outcome'' claim below is independently checkable.

{\small
\begin{longtable}{p{5.2cm}p{2.2cm}p{5.2cm}}
\caption{Registered predictions and outcomes (excerpt: the most informative
entries; the full 25-entry hash registry, which also covers the estimator,
WebShop, ScienceWorld, and 8B-family arms, ships with the code
release).}\label{tab:prereg}\\
\toprule
registered prediction & outcome & note \\
\midrule
\endfirsthead
\toprule
registered prediction & outcome & note \\
\midrule
\endhead
Dose--response: weaker collapses earlier, 8B latest & falsified $\to$ revised & replaced by
saturation-race law (val-zero: 1.7B@30 $<$ 8B@65 $<$ 4B@85) \\
1.7B collapse at ${\sim}$step 30 & hit & \\
Anneal arm: no recovery after $\lambda\to 0$ & hit & irreversibility evidence
\#2; the turn precedes the anneal's tail ($\lambda{\approx}0.068$ at the turn) \\
Rescue arm: no collapse without std & hit & causal localization \\
Arm D: upper-bound $\Phi$ hard to learn & hit & base-rate plateau 0.90--0.93 \\
Arm D: prediction converts to success & not testable at training caliber; revised
to partial hit by formal eval & prediction did not exceed base rate in training;
the 140-game endpoint shows partial behavioral conversion (24.3\% vs.\ 10.7\%
no-signal) and the probe $+0.16$ internalized knowledge (App.~D) \\
Self-report: confidence saturates day one & hit & variance-trajectory retro instance \\
Self-report: harmless under std (variance ${\to}0$) & hit at training caliber;
revised by formal eval & training last-6 49.5\% $=$ baseline; the 140-game endpoint
revises the arm to 38.6\%, below baseline, the table's largest downward revision
(\S\ref{sec:e01}) \\
Anchor-QA: harmless, no gain (revised pre-launch) & premise falsified; formal
endpoint level with baseline & recall did not saturate (0.65${\to}$0.95); the
formal endpoint (44.3\%) lands level with the 44.1\% baseline mean, and drag
appears only at training caliber \\
$\Delta$acc progress reward: safe under std & premise falsified; outcome consistent
with trajectory form & accuracy plateaued at 0.92, variance persisted, chronic drag
(28.1\%); endpoint form falsified prospectively a second time \\
Group-size ablation ($n{=}8$, constant budget): collapse delayed but not prevented & hit
& literal zero at step 135 vs.\ 85; all-fail fraction climbs from an early
${\sim}0.5$--$0.6$ (rolling) to a terminal lock at $1.00$, measuring the
feedback loop directly \\
Group-size ablation ($n{=}16$): collapse further delayed & half-hit & collapsed
(\checkmark) at step 80, \emph{earlier} than $n{=}4$; the two-point delay trend was
falsified by the third point \\
$\lambda$-sweep ($\lambda \in \{0.01,0.03,0.1\}$): std runs behave identically & hit
(mechanism) & advantage scales $\pm 7.3/\pm 9.1/\pm 7.2$, no trend over tenfold
$\lambda$; all collapse; timing within variance band \\
All-fail-group filter: rescues without touching std & hit & 0\% $\to$ 39.6\%; partial
recovery ($-9.9$ vs.\ baseline), consistent with discarding 23--77\% of samples \\
Aux-loss gain replicates across seeds & hit at two of three seeds & matched-seed
pairing $+19.8/+24.4/-1.6$ at seeds 0/42/96 (vs.\ the three-seed baseline mean:
$+22.6/+16.8/+3.3$); magnitude varies, and the third seed lands at its own baseline \\
Random-vocabulary strong placebo: gains like gold/shuffled $\Rightarrow$ pure
update/format effect; baseline band $\Rightarrow$ environment statistics are the
active ingredient & hit (first branch) & 73.4\%, statistically tied with gold
(69.3) and shuffled (76.0); attribution resolved to the update/format effect \\
R46a (compute-matched replay, no aux target): no gain & hit & 48.9\% $\approx$ baseline; compute/update count contributes nothing \\
R46b (random-token placebo, format-stripped): between baseline and random-vocabulary placebo & hit &
61.5\%, dead-center of the preregistered band; graded attribution ladder complete \\
Rescue from inside the absorbing state (mean-only switch): no recovery & hit & 11
checkpoints at literal zero over a 50-step window; advantage scale confirms amplifier
removed; the absorbing state is self-sustaining \\
Fast anneal completed inside the honeymoon window: avoids collapse & hit &
40.1\%; timed withdrawal is the schedule-side preventive route \\
Matched-variance noise reward: collapses, honeymoon absent & half-hit & no honeymoon
(\checkmark), severe drag $-26$pt (\checkmark), but no absorbing state: collapse
requires variance $\times$ hackability; criterion refined to two dimensions \\
\bottomrule
\end{longtable}}

\section{Additional Figures}
\begin{figure}[H]
  \centering
  \includegraphics[width=\linewidth]{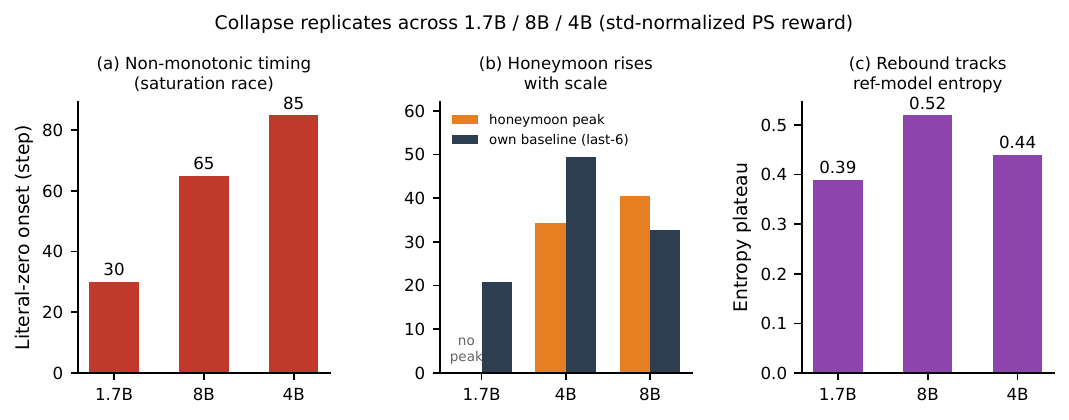}
  \caption{Three-scale summary: (a) non-monotonic literal-zero onset; (b) honeymoon
  peaks rising with scale (baseline bars are training-caliber last-6; the 8B formal
  baseline is 45.0, \S\ref{sec:e01}, which no peak exceeds); (c) post-collapse
  entropy plateaus track reference-model entropy.}
\end{figure}

\section{Reproducibility Details}
\label{app:config}
\begin{table}[H]
\centering\small
\begin{tabular}{l p{7.9cm}}
\toprule
train batch $\times$ group size & 8 $\times$ 4 ($=$ 32 trajectories/step) \\
validation & 32 episodes (ALFWorld/WebShop), 16 (HiddenRule-Gym at 1.7B; the 4B HRG capacity arm used 32); sampled decoding, temperature 0.4, every 5 steps \\
horizon / max steps & 50 (ALFWorld), 30 (HiddenRule-Gym), 15 (WebShop) \\
optimizer / lr & AdamW, $10^{-6}$, KL loss coef 0.01 (low-var estimator) \\
lengths & prompt 1536 / response 512 tokens \\
task reward & terminal; ALFWorld/HiddenRule-Gym binary success, WebShop
continuous matching score in $[0,1]$ \\
shaping & $\lambda = 0.1$, difference-form (superficially PBRS), injected per step-sample \\
advantage normalization unit & GRPO mean/std pooled over all step-samples of a group
(cross-step pooling, the framework default; pool $=$ 4 trajectories $\times$
$\le$50 steps), z-score broadcast to each sample's response tokens (\S\ref{sec:props}) \\
invalid-action penalty & 0.1 \\
auxiliary CE weight (loss-channel arms) & $\beta = 0.1$ (full weight; the 8B
half-weight arm uses $\beta = 0.05$) \\
training length & 150 steps ($\approx$ 18--24 h per run; the post-hoc rescue
extension runs from the step-125 checkpoint to step 175) \\
precision & FP32 master weights; BF16 compute (FSDP mixed precision, FP32 gradient
reduce); BF16 rollout (vLLM). Critic (PPO/GAE arm) follows the actor scheme \\
hardware & 8$\times$96\,GB GPUs per arm (4B/8B); 2$\times$32\,GB (1.7B, HRG) \\
\bottomrule
\end{tabular}
\caption{Shared configuration across all comparison arms (seed 0).}
\end{table}

\section{Published reference points on these benchmarks}
\label{app:sota}

For context only: recent RL-trained agent results on the same benchmarks, with
numbers quoted verbatim from the cited papers. None is comparable to our arms.
They use different base models (Qwen2.5-Instruct family), full training budgets
(ours run at one quarter budget by design, \S\ref{sec:exp-setup}), and their own
evaluation protocols and splits; we include them to locate the setup in the
literature, not to rank against it. All of our claims remain same-budget
arm-to-arm comparisons.

\begin{table}[H]
\centering\scriptsize
\begin{tabular}{p{2.6cm} p{2.2cm} p{4.6cm} p{2.9cm}}
\toprule
method & model & reported result & benchmark(s) \\
\midrule
GRPO (as reported in \citealp{feng2025gigpo,he2026hgpo}) & Qwen2.5-1.5B / 7B &
${\sim}73$--$74\%$ / ${\sim}79$--$83\%$ & ALFWorld \\
GiGPO \citep{feng2025gigpo} & Qwen2.5-1.5B--7B & ${>}{+}12$ / ${>}{+}9$ over
GRPO & ALFWorld / WebShop \\
HCAPO \citep{tan2026hcapo} & Qwen2.5-7B & $+13.8$ / $+7.7$ over GRPO &
ALFWorld / WebShop \\
SCPO \citep{xu2026scpo} & Qwen2.5-1.5B & $93.7 \pm 4.1\%$ / $74.8 \pm 2.0\%$ &
ALFWorld / WebShop \\
BEACON \citep{wang2026beacon} & --- & $92.9\%$, long-horizon split (their
GRPO: $53.5\%$) & ALFWorld; also WebShop, ScienceWorld \\
RLVMR \citep{zhang2025rlvmr} & 7B & $83.6\%$, hardest unseen split &
ALFWorld, ScienceWorld \\
\bottomrule
\end{tabular}
\caption{Published reference points (quoted from the cited papers' reported
results; protocols, budgets, and splits differ from ours and from each other).}
\label{tab:sota}
\end{table}

\section{Additional Experimental Detail}
\label{app:detail}

\paragraph{Boundary conditions of the rescue.}
Std normalization is genuinely useful where it was designed: an adaptive gradient
with provable convergence benefits when within-group reward variance is
task-informative \citep{ge2026curvature}. Our results delimit the failure regime
(a dense policy-dependent shaping term whose spread dominates the within-group
variance; sparse terminal success with all-fail groups is the extreme case), and
independent evidence that std removal repairs GRPO in stochastic-outcome domains
\citep{bereket2025uncalibrated} suggests the repair generalizes along the axis
of ``variance not aligned with task quality.'' A learned critic derives its
scale from bootstrapped value estimates rather than the spread of a small
same-prompt group, so the all-fail degeneracy behind
Proposition~\ref{prop:lambda} has no direct analogue there; the completed
PPO/GAE arm confirms it (identical recipe, 150 steps, no collapse window,
37.5\%, prediction accuracy peaking at 0.60), and the RLOO pair replicates the
pattern inside the group-relative family (40.6\% vs.\ its own 44.8\% baseline).
The danger localizes to the within-group $\sigma$ division, not group-relative
estimation as such (\S\ref{sec:generalization}).

\paragraph{Post-hoc belief probes (HRG).}
Privileged-latent probes on final checkpoints (leakage-audited questions; yes/no
cells reported against majority-class baselines) give the belief-side view.
State-memory probe accuracy orders the arms
$C{=}0.48\ (0.61) > \text{task-weighted } (0.46) > \text{pure GRPO } (0.40) >
C{=}0.90 \approx \text{Arm D } (0.35) > \text{generic-PS } (0.31)$ against a
0.14 random floor: the training-time F1 finding (generic saturated signal erodes
beliefs, task-relevant weighting preserves them) is reproduced by an independent
instrument. Every arm except the mid-coverage and task-weighted ones sits at or
below the no-signal 0.40, including the privileged-feature arm D (0.35), not
only the two pathological regimes (saturation 0.31, overload 0.35): a
non-covering, overloaded, or narrowly targeted $\Phi$ erodes general state
memory, while mid-coverage and task-weighted $\Phi$ strengthen it (0.61, 0.46).
Arm D's low general-memory score coexists with its
$+0.16$ on the specific latent it was trained on (below): the cost is paid in
breadth, not on the trained target. Arm D adds a properly-baselined
dissociation: on the latent it was trained to predict it probes at 0.99 against
a 0.83 majority-class baseline (a real $+0.16$ of internalized knowledge; this
post-hoc yes/no probe is a different instrument from the training-time
generative prediction accuracy, whose 0.90--0.93 plateau in \S\ref{sec:hrg} is
scored against the environment, hence the different baselines), while
task success converts only partially (24.3\% vs.\ 10.7\% at the 140-game
endpoint, \S\ref{sec:e01}; the 16-game window had read it as floor). Knowledge
internalization and behavioral conversion remain separable in degree, not
absolutely. No arm exceeds the majority baseline on rule-relevance probes,
consistent with none cracking the rules.

\section{Complete experiment inventory}
\label{app:inventory}

All 74 training arms of the study, grouped by environment. Design-factor columns: $\sigma$ = group-std normalization in the advantage; R = dense prediction signal through the \emph{reward} channel; L = the same signal through the \emph{loss} channel (auxiliary CE); $\varnothing$ = content-free variant (shuffled / random / noise). ``last-6'' is the training-time endpoint; ``seen''/``unseen'' are the formal endpoint evaluations of \S\ref{sec:e01} where available (ALFWorld: 140-game seen and 134-game unseen padded to 140 episodes; 200 goals for WebShop; 140 probe-family games for HiddenRule-Gym). All completed arms ran the full 150-step budget under the aligned configuration (Appendix~\ref{app:config}), with one exception: the post-hoc rescue arm extends its parent run beyond the budget, a 50-step mean-only window from the step-125 checkpoint to step 175. Every substantive arm carried a hash-registered prediction (25 registry
entries, several covering multi-arm batches, including the estimator, WebShop,
ScienceWorld, and 8B-family arms; the registry file ships with the code
release). Appendix~\ref{app:prereg} reproduces the most informative subset with
outcomes. Arms marked (running)/(queued) belong to the frozen final queue. Within
$\sigma$\checkmark R\checkmark\ rows, outcome differences trace to the modifier
named in the Configuration column (group filtering, anneal schedule, decoupled or
mean-only normalization): the $\sigma$ column marks presence of the within-group
divisor, not absence of every mitigation.

\scriptsize
\begin{longtable}{p{3.6cm}ccccc ccc l}
\caption{ALFWorld, Qwen3-4B (main battery)}\label{tab:inv4b}\\
\toprule
 & & \multicolumn{3}{c}{signal} & & \multicolumn{3}{c}{Metrics (\%)} & \\
\cmidrule(lr){3-5}\cmidrule(lr){7-9}
Configuration & $\sigma$ & R & L & $\varnothing$ & seed & last-6$^{(\uparrow)}$ & seen$^{(\uparrow)}$ & unseen$^{(\uparrow)}$ & outcome \\
\midrule
\endfirsthead
\toprule
 & & \multicolumn{3}{c}{signal} & & \multicolumn{3}{c}{Metrics (\%)} & \\
\cmidrule(lr){3-5}\cmidrule(lr){7-9}
Configuration & $\sigma$ & R & L & $\varnothing$ & seed & last-6$^{(\uparrow)}$ & seen$^{(\uparrow)}$ & unseen$^{(\uparrow)}$ & outcome \\
\midrule
\endhead
\multicolumn{10}{l}{\emph{Baselines and controls}}\\
GRPO baseline & \checkmark &  &  &  & 0 & 49.5 & 42.9 & 42.1 & baseline \\
GRPO baseline & \checkmark &  &  &  & 42 & 39.1 & 40.7 & 37.1 & baseline \\
GRPO baseline & \checkmark &  &  &  & 96 & 51.6 & 48.6 & 46.4 & baseline \\
mean-only control &  &  &  &  & 0 & 52.6 & 57.9 & 44.3 & control \\
2-epoch replay (compute) & \checkmark &  &  &  & 0 & 48.9 & 48.6 & 39.3 & parity \\
\midrule\multicolumn{10}{l}{\emph{Reward channel, std normalization on}}\\
PS prediction reward & \checkmark & \checkmark &  &  & 0 & 1.0 & 0.7 & 0.0 & \textbf{collapse} \\
PS prediction reward & \checkmark & \checkmark &  &  & 42 & 0.0 & 0.0 & 0.0 & \textbf{collapse} \\
PS, $\lambda{=}0.01$ & \checkmark & \checkmark &  &  & 0 & 1.0 & — & — & \textbf{collapse} \\
PS, $\lambda{=}0.03$ & \checkmark & \checkmark &  &  & 0 & 0.0 & — & — & \textbf{collapse} \\
PS, group $n{=}8$ & \checkmark & \checkmark &  &  & 0 & 2.6 & — & — & \textbf{collapse} \\
PS, group $n{=}16$ & \checkmark & \checkmark &  &  & 0 & 0.0 & — & — & \textbf{collapse} \\
PS, slow anneal $\lambda{\to}0$ & \checkmark & \checkmark &  &  & 0 & 0.0 & — & — & \textbf{collapse} \\
PS + fast anneal & \checkmark & \checkmark &  &  & 0 & 40.1 & — & — & \textbf{rescue} \\
PS, GiGPO (step groups) & \checkmark & \checkmark &  &  & 0 & 0.0 & — & — & \textbf{collapse} \\
matched-variance noise & \checkmark & \checkmark &  & \checkmark & 0 & 23.4 & — & — & severe drag \\
$\Delta$acc progress reward & \checkmark & \checkmark &  &  & 0 & 28.1 & 32.1 & 29.3 & chronic drag \\
$\Delta$acc progress reward & \checkmark & \checkmark &  &  & 42 & 34.4 & — & — & chronic drag \\
anchor-QA reward & \checkmark & \checkmark &  &  & 0 & 44.3 & 44.3 & 33.6 & training-only drag \\
self-report reward & \checkmark & \checkmark &  &  & 0 & 49.5 & 38.6 & 39.3 & parity/drag \\
\midrule\multicolumn{10}{l}{\emph{Reward channel, normalization modified}}\\
PS + mean-only &  & \checkmark &  &  & 0 & 51.6 & 52.9 & 40.7 & \textbf{rescue} \\
PS + all-fail filter & \checkmark & \checkmark &  &  & 0 & 39.6 & 40.7 & 35.7 & partial rescue \\
PS + decoupled norm & \checkmark & \checkmark &  &  & 0 & 31.2 & 27.9 & 30.0 & drag \\
PS + decoupled norm & \checkmark & \checkmark &  &  & 42 & 42.2 & 32.1 & 26.4 & drag \\
mean-only@step125 (post hoc) &  & \checkmark &  &  & 42 & 0.0 & — & — & no recovery \\
\midrule\multicolumn{10}{l}{\emph{Loss channel (auxiliary CE)}}\\
aux gold & \checkmark &  & \checkmark &  & 0 & 69.3 & 68.6 & 65.0 & gain \\
aux gold & \checkmark &  & \checkmark &  & 42 & 63.5 & 57.9 & 45.0 & gain \\
aux gold & \checkmark &  & \checkmark &  & 96 & 50.0 & 52.1 & 39.3 & parity \\
aux shuffled & \checkmark &  & \checkmark & \checkmark & 0 & 76.0 & 81.4 & 73.6 & gain \\
aux shuffled & \checkmark &  & \checkmark & \checkmark & 42 & 60.4 & 67.9 & 66.4 & gain \\
aux shuffled & \checkmark &  & \checkmark & \checkmark & 96 & 48.9 & — & — & parity \\
aux random-vocab & \checkmark &  & \checkmark & \checkmark & 0 & 73.4 & \textbf{87.1} & 71.4 & gain \\
aux random-vocab & \checkmark &  & \checkmark & \checkmark & 42 & 77.6 & — & — & gain \\
aux random-vocab & \checkmark &  & \checkmark & \checkmark & 96 & 60.9 & — & — & gain \\
aux random-tokens & \checkmark &  & \checkmark & \checkmark & 0 & 61.5 & 67.9 & 57.9 & gain \\
aux noise (R60a) & \checkmark &  & \checkmark & \checkmark & 0 & 43.2 & — & — & no gain \\
aux half-gold (R60b) & \checkmark &  & \checkmark & \checkmark & 0 & (running) & — & — &  \\
\midrule\multicolumn{10}{l}{\emph{Estimator family (same PS recipe)}}\\
PPO/GAE + PS &  & \checkmark &  &  & 0 & 37.5 & — & — & survives \\
RLOO baseline &  &  &  &  & 0 & 44.8 & — & — & baseline \\
RLOO + PS &  & \checkmark &  &  & 0 & 40.6 & — & — & survives \\
R++ baseline (E11a) &  &  &  &  & 0 & 55.2 & — & — & baseline \\
R++ + PS (E11b) &  & \checkmark &  &  & 0 & 48.4 & — & — & no collapse (post-freeze) \\
\bottomrule
\end{longtable}

\begin{longtable}{p{3.6cm}ccccc ccc l}
\caption{ALFWorld, Qwen3-8B (bistability family)}\label{tab:inv8b}\\
\toprule
 & & \multicolumn{3}{c}{signal} & & \multicolumn{3}{c}{Metrics (\%)} & \\
\cmidrule(lr){3-5}\cmidrule(lr){7-9}
Configuration & $\sigma$ & R & L & $\varnothing$ & seed & last-6$^{(\uparrow)}$ & seen$^{(\uparrow)}$ & unseen$^{(\uparrow)}$ & outcome \\
\midrule
\endfirsthead
\toprule
 & & \multicolumn{3}{c}{signal} & & \multicolumn{3}{c}{Metrics (\%)} & \\
\cmidrule(lr){3-5}\cmidrule(lr){7-9}
Configuration & $\sigma$ & R & L & $\varnothing$ & seed & last-6$^{(\uparrow)}$ & seen$^{(\uparrow)}$ & unseen$^{(\uparrow)}$ & outcome \\
\midrule
\endhead
baseline & \checkmark &  &  &  & 0 & 32.8 & 45.0 & 39.3 & baseline \\
PS prediction reward & \checkmark & \checkmark &  &  & 0 & 0.0 & 0.0 & 0.0 & \textbf{collapse} \\
\midrule
aux gold & \checkmark &  & \checkmark &  & 0 & 0.0 & 0.0 & 0.0 & \textbf{absorbing} \\
aux gold & \checkmark &  & \checkmark &  & 42 & 72.4 & \textbf{85.0} & 72.1 & \textbf{best gain} \\
aux gold & \checkmark &  & \checkmark &  & 96 & 0.0 & — & — & \textbf{absorbing} \\
aux gold, $\beta{=}0.05$ & \checkmark &  & \checkmark &  & 0 & 65.1 & — & — & escape$+$gain \\
aux shuffled & \checkmark &  & \checkmark & \checkmark & 0 & 78.6 & 79.3 & 66.4 & gain \\
prompt-only & \checkmark &  &  &  & 0 & 47.4 & 52.1 & 44.3 & level \\
\bottomrule
\end{longtable}

\begin{longtable}{p{3.6cm}ccccc ccc l}
\caption{ALFWorld, Qwen3-1.7B / Qwen2.5-1.5B}\label{tab:inv17}\\
\toprule
 & & \multicolumn{3}{c}{signal} & & \multicolumn{3}{c}{Metrics (\%)} & \\
\cmidrule(lr){3-5}\cmidrule(lr){7-9}
Configuration & $\sigma$ & R & L & $\varnothing$ & seed & last-6$^{(\uparrow)}$ & seen$^{(\uparrow)}$ & unseen$^{(\uparrow)}$ & outcome \\
\midrule
\endfirsthead
\toprule
 & & \multicolumn{3}{c}{signal} & & \multicolumn{3}{c}{Metrics (\%)} & \\
\cmidrule(lr){3-5}\cmidrule(lr){7-9}
Configuration & $\sigma$ & R & L & $\varnothing$ & seed & last-6$^{(\uparrow)}$ & seen$^{(\uparrow)}$ & unseen$^{(\uparrow)}$ & outcome \\
\midrule
\endhead
1.7B baseline & \checkmark &  &  &  & 0 & 20.9 & 17.9 & 18.6 & baseline \\
1.7B PS reward & \checkmark & \checkmark &  &  & 0 & 0.0 & 0.0 & 0.0 & \textbf{collapse} \\
1.5B full budget (anchor) & \checkmark &  &  &  & 0 & 67.2 & — & — & anchor \\
\bottomrule
\end{longtable}

\begin{longtable}{p{3.6cm}ccccc ccc l}
\caption{WebShop, Qwen3-4B}\label{tab:invws}\\
\toprule
 & & \multicolumn{3}{c}{signal} & & \multicolumn{3}{c}{Metrics (\%)} & \\
\cmidrule(lr){3-5}\cmidrule(lr){7-9}
Configuration & $\sigma$ & R & L & $\varnothing$ & seed & last-6$^{(\uparrow)}$ & seen$^{(\uparrow)}$ & unseen$^{(\uparrow)}$ & outcome \\
\midrule
\endfirsthead
\toprule
 & & \multicolumn{3}{c}{signal} & & \multicolumn{3}{c}{Metrics (\%)} & \\
\cmidrule(lr){3-5}\cmidrule(lr){7-9}
Configuration & $\sigma$ & R & L & $\varnothing$ & seed & last-6$^{(\uparrow)}$ & seen$^{(\uparrow)}$ & unseen$^{(\uparrow)}$ & outcome \\
\midrule
\endhead
base, small & \checkmark &  &  &  & 0 & 60.4 & 60.0 & — & baseline \\
PS reward, small & \checkmark & \checkmark &  &  & 0 & 1.0 & 0.5 & — & \textbf{instant collapse} \\
base, full & \checkmark &  &  &  & 0 & 29.2 & 28.5 & — & baseline \\
PS reward, full & \checkmark & \checkmark &  &  & 0 & 0.5 & 0.5 & — & \textbf{collapse} \\
\midrule
aux gold, small & \checkmark &  & \checkmark &  & 0 & 71.4 & 63.0 & — & gain$\to$tie \\
aux gold, small & \checkmark &  & \checkmark &  & 42 & 50.5 & 46.0 & — & below base \\
aux shuffled, small & \checkmark &  & \checkmark & \checkmark & 0 & 70.3 & \textbf{67.0} & — & gain \\
PS + mean-only, small &  & \checkmark &  &  & 0 & 59.4 & 57.5 & — & \textbf{rescue} \\
\bottomrule
\end{longtable}

\begin{longtable}{p{3.6cm}ccccc ccc l}
\caption{HiddenRule-Gym, Qwen3-1.7B/4B}\label{tab:invhrg}\\
\toprule
 & & \multicolumn{3}{c}{signal} & & \multicolumn{3}{c}{Metrics (\%)} & \\
\cmidrule(lr){3-5}\cmidrule(lr){7-9}
Configuration & $\sigma$ & R & L & $\varnothing$ & seed & last-6$^{(\uparrow)}$ & seen$^{(\uparrow)}$ & unseen$^{(\uparrow)}$ & outcome \\
\midrule
\endfirsthead
\toprule
 & & \multicolumn{3}{c}{signal} & & \multicolumn{3}{c}{Metrics (\%)} & \\
\cmidrule(lr){3-5}\cmidrule(lr){7-9}
Configuration & $\sigma$ & R & L & $\varnothing$ & seed & last-6$^{(\uparrow)}$ & seen$^{(\uparrow)}$ & unseen$^{(\uparrow)}$ & outcome \\
\midrule
\endhead
pure GRPO, 1.7B & \checkmark &  &  &  & 0 & 8.3 & 10.7 & — & floor \\
PS generic $\Phi$ & \checkmark & \checkmark &  &  & 0 & 10.4 & 14.3 & — & floor; F1 collapse \\
PS task-weighted $\Phi$ & \checkmark & \checkmark &  &  & 0 & 5.2 & 8.6 & — & floor; F1 held \\
arm D, privileged $\Phi$ &  & \checkmark &  &  & 0 & 10.4 & 24.3 & — & partial conversion \\
\midrule
coverage $C{=}0.23$ &  & \checkmark &  &  & 0 & 12.5 & 26.4 & — & gain \\
coverage $C{=}0.48$ &  & \checkmark &  &  & 0 & 24.0 & \textbf{34.3} & — & best HRG \\
coverage $C{=}0.66$ &  & \checkmark &  &  & 0 & 14.6 & 13.6 & — & mid dip \\
coverage $C{=}0.86$ &  & \checkmark &  &  & 0 & 22.9 & 22.1 & — & gain \\
coverage $C{=}0.90$ &  & \checkmark &  &  & 0 & 3.1 & 10.7 & — & benefit consumed \\
\midrule
aux gold @$C{=}0.48$ & \checkmark &  & \checkmark &  & 0 & 14.6 & 17.1 & — & marginal \\
aux shuffled @$C{=}0.48$ & \checkmark &  & \checkmark & \checkmark & 0 & 14.6 & 15.0 & — & tied w/ gold \\
pure GRPO, 4B & \checkmark &  &  &  & 0 & 26.6 & — & — & off floor \\
\bottomrule
\end{longtable}

\begin{longtable}{p{3.6cm}ccccc ccc l}
\caption{ScienceWorld, Qwen3-4B}\label{tab:invsw}\\
\toprule
 & & \multicolumn{3}{c}{signal} & & \multicolumn{3}{c}{Metrics (\%)} & \\
\cmidrule(lr){3-5}\cmidrule(lr){7-9}
Configuration & $\sigma$ & R & L & $\varnothing$ & seed & last-6$^{(\uparrow)}$ & seen$^{(\uparrow)}$ & unseen$^{(\uparrow)}$ & outcome \\
\midrule
\endfirsthead
\toprule
 & & \multicolumn{3}{c}{signal} & & \multicolumn{3}{c}{Metrics (\%)} & \\
\cmidrule(lr){3-5}\cmidrule(lr){7-9}
Configuration & $\sigma$ & R & L & $\varnothing$ & seed & last-6$^{(\uparrow)}$ & seen$^{(\uparrow)}$ & unseen$^{(\uparrow)}$ & outcome \\
\midrule
\endhead
base (boil) & \checkmark &  &  &  & 0 & (running) & — & — &  \\
PS reward (boil) & \checkmark & \checkmark &  &  & 0 & (queued) & — & — &  \\
\bottomrule
\end{longtable}

\normalsize

\section{Extended related work}
\label{app:related-ext}
\paragraph{Group-normalized RL and its advantage pathologies.}
Group-relative policy optimization \citep{shao2024grpo,guo2025deepseekr1}
normalizes each group's returns by their mean and standard deviation. Dr.GRPO
\citep{liu2025drgrpo} identified this estimator's std and length biases; we adopt
its mean-only variant as an \emph{instrument} and contribute the causal
demonstration that in long-horizon sparse-success agents the std term alone
separates catastrophic from benign. \citet{bereket2025uncalibrated} independently
show that removing group-std normalization fixes GRPO-induced overconfidence for
stochastic outcomes, while \citet{ge2026curvature} defend std normalization as an
adaptive gradient on reasoning benchmarks; our results delimit where that defense
breaks: phases where an $\varepsilon$-scale shaping term supplies the dominant
within-group variance, with sparse-success all-fail groups as the extreme case.
Related diagnoses include heterogeneous-group bias
\citep{zhu2025stratified}, local-normalization bias \citep{hu2025reinforcepp},
baseline-induced entropy hazards \citep{wu2025qae}, mean-only versus
mean-plus-variance calibration theory \citep{mroueh2025grpo}, and GDPO's decoupled
multi-reward normalization \citep{liu2026gdpo}, which our decoupled arm tests and
finds insufficient under sustained dense pressure. A distinct line treats
\emph{degenerate} (all-same-outcome) groups as zero-signal dead zones to be
pruned, replaced, or reweighted \citep{xu2025spo,zhang2026aero,yang2026biased,zhang2025scaf}.
Our setting inverts that premise: once a shaping term is injected, all-fail groups
are \emph{full-signal} zones, their $\varepsilon$-scale shaping differences
amplified to full advantage scale (Proposition~\ref{prop:lambda}). Remedies that
merely skip or downweight degenerate groups remove the amplifier's fuel without
removing the amplifier: our filter arm shows this rescues only partially
($-9.9$pt vs.\ baseline, \S\ref{sec:rescue}), and by suppressing the symptom
such remedies can mask the mechanism. Concurrent work formalizes the group-std term as the single
quantity on which GRPO, Dr.GRPO, and DAPO differ \citep{bay2026identity},
consistent with our localization; we add what that term \emph{does} when an
injected signal is the only within-group variance. Qualitative precedents of the
amplification exist (``illusory advantages'' in text-to-image GRPO,
\citealp{prefgrpo2025}; scale-equalizing per-component normalization,
\citealp{mogrpo2025}); Proposition~\ref{prop:lambda} is, to our knowledge, its
first formalization for the sparse-success all-fail-group structure of multi-turn
agents. Our infrastructure follows the step-independent multi-turn line
\citep{feng2025gigpo,zhou2024archer}; successors refine step-level advantage
estimation \citep{he2026hgpo} while retaining the group-std normalizer, and our
results concern that shared normalization machinery, not any particular grouping
scheme.

\paragraph{Dense signals, shaping, and reward hacking in agent RL.}
Prediction-\emph{error} seeking \citep{pathak2017curiosity} is famously
vulnerable to the noisy-TV trap \citep{burda2019curiosity,mavorparker2022ama};
our failure is the mirror image, prediction-\emph{accuracy} seeking parking in
predictable corners \citep{friston2012darkroom,baltieri2019darkroom}, a pathology
one deep active-inference agent already exhibited by mastering a single repeated
action \citep{champion2024deconstructing}. On the shaping side, Ng-style
invariance \citep{ng1999policy} assumes a policy-independent potential; dynamic
and learned potentials break it in known ways
\citep{devlin2012dynamic,harutyunyan2015dpba,behboudian2021pies,forbes2024pbim},
but that line never considers the interaction with group normalization, where our
damage concentrates. Dense process rewards are known to be hacking-prone
\citep{cui2025prime,pan2022misspecification,wang2026hackingsurvey}; the standard
mitigation is the progress principle, reward change rather than level
\citep{setlur2024pav,hou2025lpm,wang2025igpo,xi2025agentprm}. Our $\Delta$acc arm
is its preregistered test, and its failure (accuracy plateaus below saturation,
variance persists, chronic drag) sharpens the boundedness principle of
\citet{fu2025par}: return-level boundedness does not survive group z-scoring at
the step level. RAGEN's ``Echo Trap'' \citep{wang2025ragen} and stability reports
for dense turn-level rewards \citep{wang2025practitioner} are near neighbors
observed without a controlled dense-signal culprit.

\paragraph{Auxiliary objectives and world modeling for agents.}
From UNREAL \citep{jaderberg2017unreal} to modern LLM agents, world-model signals
delivered through the loss channel have a consistent track record
\citep{shrivastava2026echo,lu2026paw,voelcker2024selfprediction}. Nor is the
reward channel uniformly hostile: RWML \citep{yu2026rwml} succeeds with an
embedding-space gap reward on ALFWorld while warning that token-level next-state
prediction ``can lead to model collapse,'' a one-line anticipation of the failure
we characterize, and VAGEN \citep{wang2025vagen} rewards world-model reasoning
under a redesigned estimator; our variance-trajectory criterion must be, and
is, consistent with these successes (\S\ref{sec:variance-profile}). What this literature
lacks is a controlled comparison holding the signal fixed while varying only the
consumption mechanism; our matrix supplies it (\S\ref{sec:channel}), with a
placebo ladder, a compute-matched control, and a gradient-interference probe.
Relative to ECHO \citep{shrivastava2026echo}, which established the auxiliary-CE
recipe's effectiveness, we contribute the attribution: neither content--context
pairing nor environment statistics nor extra compute is the active ingredient.

\paragraph{Training instability, seed variance, and scale.}
Seed-dependent instability of language-model fine-tuning is well documented,
including runs that diverge partway through training \citep{dodge2020finetuning},
with vanishing gradients as one driver \citep{mosbach2021stability}. Our 8B
bistability (\S\ref{sec:generalization}) is not this phenomenon rebranded: two
of three gold seeds enter the same early collapse window (the third locks later,
from the lead), the failed basin is a mechanistically
characterized absorbing state (prediction-format saturation under task-gradient
starvation), and the escaped basin lands at the study's best gold-signal result,
a pattern consistent with two distinct attractors rather than diffuse variance,
though with six occupancy samples it remains a structural hypothesis
(\S\ref{sec:limitations}). Scale-dependent instabilities invisible at smaller
scales, and their predictability from internal signals, are documented for
pretraining \citep{wortsman2023smallscale}; our entropy-based precursor plays the
analogous role for agent RL. Finally, the auxiliary-update framing connects to
negative transfer in auxiliary-task learning \citep{jiang2023forkmerge}: at 4B
the auxiliary CE pass is pure positive transfer, at 8B its sign becomes
seed-dependent, a scale boundary auxiliary-task weighting schemes may need to
negotiate.

\section{Collapse mechanism: measurement detail}
\label{app:mech-detail}
\paragraph{Budget vs.\ published ALFWorld numbers.}
Published full-budget GRPO baselines on this stack reach ${\sim}73$--$74\%$ (1.5B) and
${\sim}79$--$83\%$ (7B) on ALFWorld \citep[reported as the GRPO baseline
in][]{feng2025gigpo,he2026hgpo}. Our arms deliberately run at one quarter of that
budget (32 trajectories/step, 150 steps) so that fourteen mutually comparable arms
fit a fixed compute envelope; every claim is a same-budget arm-to-arm comparison,
none depends on absolute success rates, and a full-budget sanity anchor
(Qwen2.5-1.5B, 128 trajectories/step) reaches 67.2\% in our hands, on the
published baseline's convergence trajectory. Published reference points on
these benchmarks, including current state-of-the-art claims, are collected in
Appendix~\ref{app:sota} with the protocol caveats that make them
non-comparable to our same-budget arms.

A replay of a collapsed 1.7B checkpoint against a healthy same-scale policy (12
episodes each, identical harness) confirms the drift directly: state-visitation
entropy drops tenfold (0.91${\to}$0.09; 1.25 locations per episode vs.\ 4.6),
consecutive-action repetition rises 0.21${\to}$0.93, episode length pins at the
horizon, and the collapsed policy is \emph{more} compliant than the healthy one
(invalid-action rate 0.005 vs.\ 0.173, prediction-parse validity 1.00): better behaved on every
surface metric we log, and it solves nothing. Three direct measurements complete
the picture. First, preregistered constant-budget group-size ablations
($4{\times}8$ and $2{\times}16$) test whether the pathology is an artifact of
unstable small-group $\hat{\sigma}$ at $n{=}4$: collapse occurs at every group
size with \emph{non-monotone} timing (literal zero at steps 85/135/80), the
$n{=}8$ delay a two-point comparison would read as a trend being overturned by
the third point; timing is dominated by dynamics variance, not $\hat{\sigma}$
stability. Second, these runs log the all-fail group fraction: it climbs and
locks at $1.00$ (every group all-fail, task gradient identically zero, updates
driven \emph{entirely} by the prediction reward), the lock coinciding with the
literal-zero window in both measured runs (Figure~\ref{fig:groupsize-runaway});
a calibrated retrodiction of the original collapse run shows the same ramp
(${\sim}0.65 \to {\ge}0.9$), while the rescued and baseline arms' fractions
instead decay to ${\sim}0.25$ as success accumulates. Third, the coefficient is
empirically irrelevant inside all-fail groups, as Proposition~\ref{prop:lambda}
predicts: a tenfold reduction leaves the amplified scale unchanged ($\pm 7.2$ at
$\lambda{=}0.1$ vs.\ $\pm 7.3$ at $0.01$) and the arm collapses on schedule
(last-6 1.0\%); annealing has nothing to act on.

\paragraph{Early-warning signature.}
The joint precursor is entropy monotone decline $\wedge$ prediction saturation
$\wedge$ length pinning, leading the turn by 15--30 steps. Entropy alone
false-alarms three ways (victory sharpening: the rescued arm dips to 0.095 while
winning; early sharpening: the decoupled arm dips to 0.094 with prediction
\emph{declining}; pre-breakthrough consolidation: the auxiliary-loss arm reaches
0.04 right before its 28${\to}$59\% phase transition), and it also declines for
reward-independent reasons under clipped policy-gradient training
\citep{park2025cliplow}, so the conjunction is required. We froze the rule as
entropy-has-declined $\wedge$ prediction ${\ge}0.95$ $\wedge$ length
${\ge}0.95\,H$, evaluated over history since the components arrive sequentially.
Its thresholds were calibrated only on the four definition-set collapse runs
(the three-scale collapse arms and the slow-anneal arm); the conjunction
requirement itself was motivated both by those retrodictions and by the
entropy-only dips already observed in healthy runs. Evaluated once, at a fixed
date, on the fourteen runs then available outside the calibration
set,\footnote{Three collapses: group-size $n{=}8$ ($+73$), the
$\lambda{=}0.01$ arm ($+53$), and group-size $n{=}16$ ($+13$). Eleven training-healthy arms: the
seed-0/42 baselines, mean-only PS and its no-signal control, anchor-QA,
$\Delta$acc, the all-fail filter, the gold, shuffled-gold, and
random-vocabulary auxiliary arms, and self-report. ``Healthy'' means no
sustained zero-success validation window at training caliber, so drag arms
count as healthy. Two of the healthy arms
(mean-only PS, aux gold) supplied the pre-freeze entropy-dip observations that
motivated the conjunction, so their clearing is confirmatory rather than
independent; excluding them leaves nine independent healthy arms with the same
single alarm. The decoupled arm, though completed by the evaluation date, is
absent from the recorded roster, an omission we flag rather than justify; at
its entropy dip its prediction accuracy was declining, below the rule's 0.95
gate, so the conjunction would not have alarmed on it, and its inclusion could
only have enlarged the healthy denominator. Arms completing after the
evaluation date, whether collapsing ($\lambda{=}0.03$, GiGPO, the 4B PS
seed-42 arm, both WebShop arms) or healthy (matched-variance noise, fast
anneal), were never scored against the rule; three-of-three is a claim over
the evaluated set, not over all later collapses.}
it detects all three
held-out \emph{reward-channel} collapses, 73/53/13 steps ahead of the collapse
point, defined as the first validation opening a sustained dead band
(${\le}6.5\%$ with no recovery; the first \emph{exactly}-zero checkpoint can
lag it, e.g.\ step 135 vs.\ 130 for $n{=}8$, and the 15--30-step precursor
lead above is measured to the still-earlier \emph{turn}: three distinct
calibers). Its scope is
that failure class: the 8B auxiliary-loss collapses (two locked seeds; the
direct response-length measurement is from seed 0) bypass the length-pinning
component altogether (\S\ref{sec:generalization}), structurally outside the
rule's coverage rather than a miss. Among the eleven arms healthy at training
caliber it raises one alarm, on the self-report arm, whose formal endpoint later
revises below baseline (38.6\%, \S\ref{sec:e01}); that cuts both ways (a true
detection the 32-game window could not resolve, or a false positive on an arm
that merely underperformed; the arm never collapses, so the variance criterion's
no-collapse call stands), and we report the record conservatively: three of
three detected, at most one false alarm.

\section{Fix hierarchy and signal-form detail}
\label{app:fix-detail}
\paragraph{Irreversibility.}
Constant pressure: 120 post-collapse steps at fixed $\lambda$ (1.7B), zero recovery.
Annealing: cosine $\lambda\to 0$ still collapses, the turn arriving at
$\lambda{\approx}0.068$ (step ${\sim}57$ of the schedule; literal zero follows
by ${\sim}85$), and its
final ${\sim}30$ steps at $\lambda{<}0.01$ (effectively pure GRPO) show zero
recovery; entropy remelts to 0.439, matching the un-annealed arm's 0.44
(Fig.~\ref{fig:collapse-fixes}). Post-collapse entropy rebounds at all scales
(1.7B 0.39, 4B 0.44, 8B 0.52; ordered by reference-model entropy, consistent with KL-anchored
dynamics; \citealp{cui2025entropy}): the absorbing state is \emph{behavioral},
not temperature-level, and restoring entropy does not restore the policy, so
fixes must be preventive. The
preregistered rescue arm completes the intervention set: resuming the seed-42
collapsed arm from its step-125 checkpoint, 70 steps past that arm's first
zero-success evaluation, and switching to mean-only normalization (the
intervention that prevents collapse from the start) recovers \emph{nothing} over
a 50-step window, 11 validation checkpoints at literal 0\%. The measured
advantage scale confirms the amplifier is gone: magnitudes fall from
$\pm 5$--$9$ to ${\approx}0.1$, the raw per-step scale that $\lambda$ and the
invalid-action penalty share. Prediction accuracy sits saturated at $1.0$, so
the shaping term contributes almost no within-group spread; the residual
${\approx}0.1$ excursions are raw-scale outliers (consistent with the collapsed
policy's rare invalid steps, rate $0.005$), not amplified signal, and there is
no task gradient to escape with: the absorbing state is self-sustaining. Late
scheduling, coefficient scale, and post-hoc structural intervention all fail;
three preventive routes work: structural (mean-only from the start),
sample-level (dropping all-fail groups, partial rescue at 39.6\%,
\S\ref{sec:rescue}), and \emph{timed withdrawal} (a preregistered fast-anneal
arm completing the same cosine schedule inside the honeymoon window avoids
collapse entirely, 40.1\%, Fig.~\ref{fig:collapse-fixes}). Irreversibility binds
after the turn, not before.

\paragraph{Matched-variance noise isolates hackability.}
A preregistered control replaces the prediction score with random potentials of
matched per-step variance: the amplifier stays fed, but the policy has no control
over the signal. The arm shows no honeymoon, drags severely (23.4\%, $-26$ vs.\
baseline), and yet never enters an absorbing state across 150 steps. Sustained
variance is therefore sufficient for damage but not for the lock; the absorbing
state additionally requires the signal to be \emph{optimizable by the policy}.
This is the second axis of the criterion (\S\ref{sec:variance-profile}); the two
axes together are necessary in every observed reward-channel lock but not jointly sufficient: the
$\Delta$acc and anchor-QA arms satisfy both and neither locks (chronic drag for
$\Delta$acc; training-caliber-only drag for anchor-QA, whose formal endpoint is
level with baseline, \S\ref{sec:e01}). It is
why the noise arm's registered prediction scored a half-hit (Appendix~A).

\paragraph{Decoupling is not enough.}
Per-channel advantage decoupling (GDPO-style; prediction channel independently
normalized, contribution capped at $\lambda$) does not collapse; the cap holds
empirically (0.066--0.077). But it ends at \textbf{31.2\%} last-6, well below the
mean-normalized arms; a second seed lands level with its matched baseline at
training caliber (42.2 vs.\ 39.1) but reproduces the deficit formally (27.9/32.1
against matched baselines 42.9/40.7; descriptive comparison). The measured scales
carry the lesson (Fig.~\ref{fig:dose}): the healthy mean-only arm's raw shaping
advantage ($\pm 0.098$) is \emph{larger} than the dragging decoupled arm's capped
one ($0.066$--$0.077$), so magnitude alone does not order the outcomes; amplification structure
does. The per-channel z-score re-stretches vanishing raw differences to a fixed
$\lambda$-scale pressure regardless of the signal's actual spread, a capped copy
of the same amplifier, whereas the mean-only contribution is the raw
$\lambda(r-\bar r)$, tied to that spread. The ladder reads: unamplified
${\approx}\lambda$ harmless / amplified-but-capped chronic drag / amplified
full-scale lethal, ordered by structure, not magnitude.

\section{Attribution ladder detail}
\label{app:attr-detail}
Two further controls complete a graded attribution ladder. The preregistered
strong placebo (random-vocabulary: environment-disjoint words; SHA256 digest
published before completion) reached \textbf{73.4\%}, statistically tied with
true gold (69.3\%) and shuffled gold (76.0\%); true gold is, if anything, the
lowest of the three. A compute-matched replay control (double PPO epochs, no
auxiliary target) gains nothing (48.9\%), and a format-stripped placebo (CE on
bare out-of-domain words, no schema) captures a majority of the effect (61.5\%).
The ladder reads: baseline 49.5 ${\approx}$ compute replay 48.9 $<$ random-token
placebo 61.5 $<$ schema placebos 73.4--76.0 ${\approx}$ true gold 69.3. The
active ingredient is the auxiliary CE pass on \emph{novel target strings}
(${\sim}+12$ training caliber; $+25$ formal over the matched seed-0 baseline),
with schema-consistent formatting adding $+7.8$ to $+14.5$ at training caliber
that the formal ladder revises to anywhere from $+0.7$ (gold) to $+19.2$
(random-vocabulary) over the format-only rung: the decomposition's magnitudes
are caliber-dependent and not cleanly additive, while the ordering (compute
${\approx}$ baseline $<$ format-only $\leq$ schema placebos) and the
content-free conclusion are stable across calibers. Compute contributes nothing
and content contributes nothing \emph{positive} at any level (formally, gold sits
below the placebo level at both matched seeds: 68.6 vs.\ the three-placebo s0
mean 78.8, and 57.9 vs.\ the single completed s42 placebo endpoint 67.9,
\S\ref{sec:e01}). This is a regularization /
format-anchoring effect, echoing in-context results where label-randomized
demonstrations retain most of their benefit \citep{min2022rethinking}. A
preregistered gradient-noise falsifier closes the remaining alternative:
replacing the auxiliary CE gradient with structure-matched, zero-mean per-token
sign noise (Rademacher $\pm\beta$, everything else identical) lands at 43.2\%
last-6, inside the baseline band and 26 points below gold; arbitrary update
perturbation of matched magnitude buys nothing, so the gain requires
\emph{directed} novel-target gradients, not update noise (training caliber;
registered in the review batch, outside the fourteen-arm matrix). Stated at
its strongest: on class means, a format-consistent auxiliary CE pass on
\emph{arbitrary} strings outperforms every reward-channel delivery of a
genuinely informative signal (the weakest placebo seed, shuffled s96 at 48.9,
lands level with the mean-only arms).
This bears awkwardly on the auxiliary-world-model literature: gains attributed
to world modeling may be substantially attributable to the auxiliary update
itself, a hypothesis our controls isolate but only compute-matched ablations in
those systems can settle.

\section{WebShop and HiddenRule-Gym detail}
\label{app:env-detail}
The reward-channel failure generalizes, and the task regime sets its \emph{form}.
Full-mode WebShop collapses after a parity phase (turn
${\sim}60$, sustained zero from ${\sim}105$, within the 30--135 val-zero range
the ALFWorld arms span once the group-size ablations are included): the
ALFWorld arc with the honeymoon replaced by baseline tracking, so not a
honeymoon in the Table~\ref{tab:collapse} sense. Small-mode WebShop shows
\emph{instant suppression}, completing a four-form taxonomy:
honeymoon-collapse (ALFWorld), collapse-after-parity (full-mode WebShop, the
same delayed arc with the honeymoon replaced by parity), instant suppression,
and chronic drag (the capped arms). The regime split fits the
two-condition account (\S\ref{sec:variance-profile}): in the easy regime
(baseline 60\%) a continuous task score over short episodes lets prediction
variance dominate the group advantage from step~0, while in the hard regime
(baseline 29\%) scores run low and within-group return spread is small,
restoring the runway in which the shaping term comes to dominate, and with it
the delayed
collapse-after-parity arc (early dense rewards biasing learning away from
exploration is a known developmental-RL effect, \citealp{park2025s2d}). The
environment moderates form and timing; occurrence tracks the channel. The
attribution and the rescue both transfer: a shuffled-gold placebo lands at
70.3\%, statistically tied with true gold's 71.4\% (formally 67.0 vs.\ 63.0,
the placebo again at or above gold), so the content-free result
is not an ALFWorld artifact, and mean-only normalization applied to the
instant-suppression regime lands at 59.4\%, parity with the 60.4\% baseline and
a $+58$-point rescue over the std arm. (WebShop has no mean-only no-signal
control, so the parity reading is a cross-normalization comparison like the
HRG references; since the normalizer difference, $+3.1$ training / $+13.8$
formal on ALFWorld, could only flatter the arm, and it still does not beat the
baseline, the no-gain regularity holds a fortiori.) A second auxiliary-loss seed qualifies
the gain sharply: 50.5\% training caliber, $-9.9$ below its baseline, 46.0 vs.\
60.0 formal ($-14.0$); the sign itself flips with the seed, and the first
seed's formal gain compresses to $+3.0$ (63.0 vs.\ 60.0), statistically level.
What replicates across environments is the seed-to-seed bandwidth, not the
gain's direction: the WebShop auxiliary result is a single-seed-pair difference
($+11.0$/$-9.9$ training; $+3.0$/$-14.0$ formal), not a confirmed effect size.

\paragraph{The loss-channel gain is not unconditional.}
On HiddenRule-Gym at 1.7B (coverage-0.48 protocol), the same auxiliary recipe reaches
17.1\% under the 140-game endpoint protocol (\S\ref{sec:e01}): statistically
inseparable from the no-signal GRPO arm (10.7\%; a clean std-to-std comparison)
and below the
coverage-matched reward arm (34.3\%, a 17.2-point gap with disjoint intervals,
under the categorical bar of \S\ref{sec:e01} and cross-normalization: the
reward arm is mean-only while the auxiliary arm keeps std). The gain has conditions
(capacity, coverage structure, environment class) that one environment success
plus a seed-split second (WebShop, \S\ref{sec:generalization}) do
not license extrapolating away. Its shuffled pair lands at a statistically tied
15.0\%: even this boundary-regime non-gain is content-free, so the attribution of
\S\ref{sec:channel} now holds in all three environments.

\paragraph{Does the failure depend on the advantage estimator?}
Four completed estimator arms (GiGPO; PPO/GAE; the two arms of an RLOO
baseline--signal pair) align on a single design bit. GiGPO \citep{feng2025gigpo}, which adds
\emph{step-level} grouping on top of episode-level groups while keeping the
group-std normalizer, collapses on schedule (a healthy 19--34\% band for
${\sim}45$ steps, then sustained literal zero with the familiar terminal
signature: prediction accuracy 1.000, prediction-parse validity 1.000); finer grouping
widens the supply of degenerate groups rather than fixing them. PPO/GAE (a
learned critic, no group statistics) survives the identical recipe: 150 steps,
no collapse window, prediction accuracy peaking at 0.60, a 37.5\% endpoint. RLOO
(a leave-one-out mean baseline: grouped, but no std division) survives likewise
at 40.6\% against its own baseline arm's 44.8\%. Collapse therefore tracks
whether the advantage \emph{divides by a within-group $\sigma$}, not grouping
granularity or critic versus group baseline; the signal (present in every cell
except the RLOO baseline) is necessary for the failure
(\S\ref{sec:rescue}) but does not by itself decide it. Two
qualifiers keep this honest: all estimator endpoints here are training-caliber
(formal evaluations pending at the freeze), so the load-bearing contrast is
collapse versus none, not the orderings (the PPO arm moreover has no matched
PPO baseline, and the RLOO pair's $-4.2$ sits inside single-seed noise). The
REINFORCE++ pair (global-batch whitening) was the last cell: its baseline arm
completed healthy (55.2\% last-6, the strongest baseline in the estimator
family), and its signal arm, registered before launch and completing after the
main data freeze, resolves the cell on the survival side: 48.4\% last-6 with
no collapse window at any point (series minimum 15.6\%, no dead band), no
terminal triple at step 150 (prediction accuracy 0.556, mean episode length
33.7 of 50, entropy 0.185), and an all-fail-group fraction of 0.375 rather
than a lock at 1.0. Had it collapsed, the warning would have widened to all
std normalization; its survival localizes the warning to the within-group
$\sigma$ division (training-caliber, single-seed, post-freeze; its $-6.8$
against the pair baseline sits inside the single-seed noise band and is not
interpreted).

\subsection{Separating coverage, dynamics, and capacity}

HRG's exactly computable coverage lets us separate what ALFWorld conflates.

\paragraph{Coverage axis.}
Arm A (pure GRPO): statistically at the random floor (last-6 8.3\%, formal 10.7\%,
against 7.7\%, the empirical success rate of a
uniform-random valid-action policy over 1{,}000 episodes) with \emph{gradient
starvation} (grad norm 0.05), the near-zero-difference pathology (residual
within-group spread reduces to occasional invalid-action penalties, and most
groups have none at all, placing this arm in the $\sigma_p \to 0$,
$\epsilon$-floor branch of Proposition~\ref{prop:variance}) and the
opposite pole from $\varepsilon$-amplification.
Arm B (PS, generic $\Phi$): prediction saturates 0.99 in ${\sim}20$ steps, success stays
statistically at the no-signal level (formal 14.3 vs.\ 10.7, overlapping
intervals), and the belief-probe F1 \emph{collapses} 0.51${\to}$0.24. The reward does
control which beliefs are maintained, but here it was pointed at non-covering
content. (No absorbing state forms in this environment: with the baseline itself at
the random floor, there is no success rate to destroy, so the absorbing-state
phenotype cannot manifest; the shaping pressure surfaces instead as the belief-probe
collapse above. This is why the abstract's collapse count excludes
the floor-bound HRG arms.)
Arm C (task-relevant reweighting within the same $\Phi$ family): F1 holds at 0.49, success
still floor; reallocation within a non-covering family does not buy coverage.
Arm D (upper-bound privileged feature, clean mean-only dynamics): prediction plateaus
at the base-rate ceiling 0.90--0.93 without ever cracking the rule; the 16-game
training window read its success as floor (10.4\%), but the 140-game endpoint
evaluation (\S\ref{sec:e01}) revises this to 24.3\% against the 10.7\% no-signal
reference (disjoint intervals, though the 13.6-point gap sits under the
categorical bar of \S\ref{sec:e01}, with the further caveat that the reference arm is
std-normalized while arm D is mean-only, folding in a normalization difference
that measures $+3.1$ training / $+13.8$ formal on ALFWorld and is therefore
material at this gap size, Fig.~\ref{fig:csweep}):
the internalized feature \emph{does} convert
partially, on a scale the small validation window could not resolve. Feature
difficulty remains measurable from the saturation \emph{form}: instant 0.99 means
trivially predictable; a base-rate plateau means beyond capacity.

\paragraph{Capacity axis.}
The same pure-GRPO recipe at 4B lifts off the floor ($26.6\%$ vs.\ the 1.7B arm's
$8.3\%$, ${\approx}3.2\times$; training caliber, 32- vs.\ 16-episode windows;
grad norm ${\sim}1.0$ vs.\ starvation 0.05 at 1.7B) yet plateaus for
all 150 steps: capacity explains the 1.7B floor (necessary) but does not by itself crack
hidden rules (not sufficient). Train success 0.19--0.34 without validation transfer
indicates memorization rather than rule inference. This arm is outside the
formal-evaluation roster, so the capacity reading is training-caliber only.

Family controls localize the lock. A \emph{shuffled-gold} arm (targets randomly
permuted across steps) enters no collapse window and climbs to \textbf{78.6\%}
(formal 79.3, statistically level with the gold escape seed's 85.0): the
\emph{gain} side of the content-free attribution carries to 8B. The \emph{risk}
side does not. A \emph{prompt-only} control (predict block in the prompt, no
training signal) lands at 47.4\%, healthy throughout, so the prompt change alone
neither collapses nor explains the locked run. Across the first four occupancy
samples (gold s0: 0.0; gold s42: 72.4; shuffled: 78.6; prompt-only: 47.4) only
one had locked, and the lock looked like a low-occupancy accident. Two further
family arms complete the occupancy picture (Fig.~\ref{fig:8b-family}) and
overturn that reading. A half-weight arm ($\beta{=}0.05$, on the locked seed)
escapes and finishes at 65.1\%: halving the coefficient turns the locked
absorbing state into an escapable dip, so the coefficient scale participates
causally in basin entry. (No tension with Proposition~\ref{prop:lambda}: the
coefficient cancelled there is the reward-channel $\lambda$ inside a z-scored
advantage; $\beta$ weights a loss term that never passes through advantage
normalization.) A third gold seed (96) answers bimodal-versus-continuous in the
bimodal direction: it leads the field through step 40 (71.9\%), then collapses
to literal zero at ${\sim}$step 85. Gold full-weight at 8B locks two of three
seeds; every content-free or reduced-weight arm stays healthy. The risk
concentrates in the true-gold signal at full weight. Under a content-blind null
at the observed gold lock rate (two of three), the two healthy arms that carry
the auxiliary CE channel (shuffled, half-weight) have probability
$(1/3)^2 \approx 0.11$; counting the prompt-only arm too, which carries no
auxiliary CE and is arguably not lockable under the null, would give
$0.04$. Suggestive either way, but six samples remain six
(\S\ref{sec:limitations}).

\section{Formal evaluation: out-of-distribution pass and calibration}
\label{app:e01-detail}
(ii)~The content-free attribution is now the study's most consistent formal pattern,
stated within scale: at 4B the shuffled-gold s0 placebo (81.4\%) and
random-vocabulary arm (87.1\%) sit above every true-gold seed (68.6/57.9/52.1),
while the shuffled s42 and random-token endpoints (67.9) do not. Across scales the highest
\emph{gold} endpoint is the 8B escape seed (85.0\%; the table's overall maximum
remains the random-vocabulary placebo), whose two sibling seeds lock at zero; at 8B, placebo-versus-gold ordering is replaced by the lottery structure of
\S\ref{sec:generalization}. The attribution ladder restates formally on matched seed-0 arms as 42.9
(baseline) $\approx$ 48.6 (compute control) $<$ 67.9 (random-token placebo)
$\approx$ 68.6 (gold) $<$ 81.4--87.1 (schema placebos): the format-only rung
already matches gold, and the schema placebos exceed it. One calibration note: the
placebo--gold gaps themselves (12.8--18.5 points) fall below the caption's
categorical threshold (the shuffled placebo's interval moreover overlaps gold's;
the random-vocabulary interval clears gold's, but that row is single-seed and the
gap is under 25 points), so what survives is a descriptive class-level ordering,
stated seed-paired (the three-placebo s0 mean 78.8 vs.\ gold 68.6 at seed 0;
the single completed s42 placebo endpoint 67.9 vs.\ 57.9 at seed 42; the pooled
means 76.1 vs.\ 59.5 average four placebo endpoints, three s0 and one s42,
against the three gold seeds), not a categorical claim;
pointwise, two placebo endpoints (67.9) sit just below the strongest gold seed
(68.6), so even per-seed placebo $\geq$ gold does not hold.

\paragraph{Out of distribution, every headline structure survives and every
magnitude compresses.} The five collapsed arms are literal zero there too (upper
bounds 2.7\%): the dark room is a property of the policy, not the evaluation
set. The channel dichotomy holds (every reward-channel variant at or below the
41.9\% unseen baseline mean; the mean-only pair stays tied, 40.7 vs.\ 44.3), and
the content-free ordering is preserved (placebos 57.9--73.6 above the gold mean
49.8). Four small near-baseline gaps do flip sign against their matched seed-0
references (mean-only rescue $+10.0 \to -1.4$, compute control $+5.7 \to -2.8$,
anchor-QA $+1.4 \to -8.5$, and gold s96 against its matched s96 baseline
$+3.5 \to -7.1$), all with overlapping intervals; the claims here rest on the
large gaps, not on those cells. What compresses is
magnitude: the auxiliary gain over baseline goes from $+15.4$ to $+7.9$, the two
later gold seeds falling into the baseline band; direction survives, size
roughly halves, the seed-variance theme repeats. The largest single reversion is
the random-vocabulary placebo ($87.1 \to 71.4$, ceding the top spot to
shuffled-gold at 73.6): within-class \emph{rankings} are seed-unstable even
though the class-level claim (no placebo seed below the baseline band or
collapsing; training-caliber class means 66.2 vs.\ gold 60.9) is
seed-replicated.

The weakening: self-report drops to 38.6\% (from a training-time 49.5), the
table's largest downward revision, moving it from ``parity'' to ``below the
baseline mean at point estimate'' (the interval still overlaps the baseline's).
A single 32-game-window number moved 11 points under formal evaluation: that is
the justification for this section, and the reason the single-seed rows above,
including the table-topping placebos, carry the caption's cross-seed caveat
until their seed replications complete.

\paragraph{Resolved since the previous version.}
The concern that group size $n{=}4$ is collinear with the std pathology (small
groups give unstable $\hat{\sigma}$) was closed by preregistered constant-budget
ablations at $n{=}8$ and $n{=}16$ (\S\ref{sec:failure}): collapse occurs at
every group size with non-monotone timing (85/135/80), so the pathology is
neither a small-group artifact nor reliably mitigated by larger groups. The
single-environment concern was closed by the WebShop replication
(\S\ref{sec:generalization}).

\section{Extended practical guidance}
\label{app:guidance-ext}

\begin{enumerate}
  \item Do not mix difference-form self-prediction shaping into GRPO's
  std-normalized reward when within-group task-return variance is small relative
  to the shaping term. The sparse-success all-fail regime is the extreme case
  (our collapse configuration), and the WebShop easy regime shows a second route
  (continuous scores over short episodes let prediction variance dominate the
  group advantage from step~0). The prescription is conditional: dense
  reward-channel supervision with different variance trajectories has succeeded
  on this very benchmark (\S\ref{sec:related}). The environment changes the
  failure's \emph{form} while leaving its occurrence intact
  (\S\ref{sec:generalization}).
  \item If reward-channel shaping is unavoidable, use mean-only normalization.
  Per-channel decoupling under-performs (formal 27.9/32.1 across its two seeds
  vs.\ the 52.9--57.9\% mean-only pair, a 21--30-point deficit; $-12$ to $-16$pt
  vs.\ the baseline mean at point estimate), and annealing rescues only when
  timed early: completed \emph{inside} the honeymoon window it prevents collapse
  (40.1\%), while late annealing fails and no post-hoc rescue attempt succeeded
  (irreversibility, \S\ref{sec:satrace}).
  \item Prefer signals whose within-group variance decays over training, but
  verify empirically that they actually saturate in your environment. Our
  preregistered $\Delta$acc test (echoing the progress principle of
  \citealp{setlur2024pav} and \citealp{hou2025lpm}) is the cautionary result:
  accuracy plateaued below saturation, progress variance persisted, and the arm
  ran 12 points below the baseline mean at formal point estimate (32.1\% vs.\
  44.1\%, intervals overlapping; a second seed reproduces the direction at
  training caliber, 34.4 vs.\ its matched 39.1). Constructions that force
  variance to zero at mastery do not bind if mastery is not reached.
  \item If you want the gain, add an auxiliary CE pass \emph{as a regularizer},
  and know what you are buying. Any well-formed target works: under formal
  evaluation at 4B, content-free placebo targets as a class match or beat
  true-gold targets (seed-paired: the three-placebo s0 mean 78.8 vs.\ gold
  68.6; the single completed s42 placebo endpoint 67.9 vs.\ 57.9; the
  single-seed
  random-vocabulary endpoint, 87.1\%, tops the table). \textbf{Do not pay for
  gold process supervision, and do not expect the gain to scale with signal
  quality}: the content is inert for the gain (\S\ref{sec:channel},
  \S\ref{sec:e01}), the gain comes from the update itself and not from update
  noise (a structure-matched sign-noise falsifier stays at the baseline band,
  \S\ref{sec:channel}), with the task reward
  untouched and zero measured gradient interference, and on WebShop the outcome is
  seed-split ($+11.0$/$-9.9$ training caliber; formal $+3.0$ statistically level
  and $-14.0$; hard-regime cell untested). Treat model scale as hazardous: at 8B
  the same recipe is bistable, the risk concentrating in true-gold targets at
  full weight (two of three gold seeds lock; shuffled, prompt-only, and
  half-weight arms all stay healthy; a structural reading from six occupancy
  samples, two of them content-free, \S\ref{sec:limitations}). At scale, run a seed pair before trusting
  it, and consider halving the auxiliary weight; in our single test that turned
  the locked configuration into an escapable one (65.1\%)
  (\S\ref{sec:generalization}).
  \item Monitor the joint precursor (entropy decline $\wedge$ prediction
  saturation $\wedge$ length pinning), never entropy alone (three false-alarm
  modes documented).
  \item Check feature-set difficulty via the prediction-saturation form before
  spending compute: instant 0.99 $=$ trivially predictable feature set;
  base-rate plateau $=$ beyond model capacity.
\end{enumerate}

\paragraph{Seeds and statistics.}
The headline auxiliary-loss arm carries three seeds (69.3/63.5/50.0: direction
stable at seeds 0/42, magnitude $-1.6$ to $+24.4$ under matched-seed pairing),
and the ALFWorld baseline carries three (49.5/39.1/51.6). The WebShop
replication carries two (training 71.4/50.5; formal 63.0/46.0 against a 60.0
baseline), reproducing the seed-to-seed spread while the gain's sign flips
($+11.0$/$-9.9$ training, $+3.0$/$-14.0$ formal), so we report the
cross-environment gain as a single-seed-pair difference, not a confirmed effect
size. Most mechanism arms are single-seed; the decoupled and $\Delta$acc arms
carry a second seed each (decoupled 31.2/42.2 last-6, an 11-point spread that is
itself a measurement of this uncertainty, both formal endpoints below their
matched baselines; $\Delta$acc reproduces the drag direction at training
caliber), so cross-arm gaps inherit seed variance that is only partially
quantified. The 8B family carries six occupancy samples (gold 0.0/72.4/0.0,
shuffled 78.6, prompt-only 47.4, half-weight 65.1); the third gold seed answered
bimodal-versus-continuous in the bimodal direction, and the layered pattern
(gold full-weight locks two of three; every content-free or reduced-weight arm
healthy) points at true gold at full weight as the risk concentration, but six
points cannot estimate occupancy rates: the structural description is not a
probability estimate. The 32-game validation limitation is now mostly closed:
every completed arm's final checkpoint in Table~\ref{tab:e01} has a formal
140-game evaluation and the headline gaps survive it, while the mechanism
battery ($\lambda$ ladder, group sizes, anneal pair, noise, post-hoc rescue) and
the cells the caption marks as pending (estimator controls, 8B gold s96 and
half-weight, the $\Delta$acc second seed, late placebo seeds) keep
training-caliber endpoints only (\S\ref{sec:e01}). The out-of-distribution pass
(134-game unseen split; ALFWorld arms only, WebShop and HiddenRule-Gym have no
unseen split) is complete: every headline structure is preserved and
magnitudes compress (four near-baseline gaps flip sign within noise,
\S\ref{sec:e01}), so distribution-shift claims rest on measured numbers.

\section{Criterion: compatibility and identifiability detail}
\label{app:criterion-detail}

Whether a signal saturates fast enough to exit the danger window is empirical,
and our anchor-QA arm is the cautionary instance: we predicted fast saturation,
recall instead climbed slowly (0.65${\to}$0.95) and variance persisted; the arm
showed mild drag at training caliber, though its formal endpoint lands level
with the baseline mean (44.3 vs.\ 44.1, \S\ref{sec:e01}), so the instance
supports the premise (saturation cannot be assumed) rather than the damage
prediction. The other three signal forms then separate as the criterion predicts.
Difference-form scoring keeps fluctuating per step as long as accuracy is high
but short of saturation and the policy still visits varied states, so amplification
persists through the entire approach to collapse; only inside the absorbing
state, where accuracy pins at $1.0$ and behavior degenerates to a fixed loop,
does the variance reach zero
(\S\ref{sec:satrace}), when no task gradient is left to escape with. Saturated
always-positive confidence becomes a constant: variance ${\to}0$, the
signal-fed amplification ends (what remains amplified is the penalty spread,
which every healthy baseline also carries), no collapse (the self-report arm ran at baseline parity throughout
training; its formal endpoint later revises low, a revision whose reading, true
detection or measurement artifact, is left open in \S\ref{sec:failure}). Progress-style $\Delta$acc rewards have variance
${\to}0$ at mastery \emph{by construction}, but our preregistered test is the
second cautionary instance. The arm's reward is
$r_{\Delta\text{acc}} = \lambda\,\mathrm{clip}(\Phi_t - \Phi_{t-1}, 0, 1)$: the
positive clipping already removes it from the difference-form family of
\S\ref{sec:setting}, since a nonnegative reward's episode sum no longer
telescopes to a bounded term. The running accuracy mean plateaued at 0.92 while
the per-step score $\Phi_t$ kept fluctuating around it, so positive increments
kept occurring: the unscaled clip term still paid a per-step mean of 0.095 at
step 150 ($\lambda \times 0.095$ injected), variance persisted, and the arm
ended at 28.1\% (chronic drag, the same outcome rung as the decoupled arm);
the endpoint form of the criterion would have called this arm safe, the
trajectory form correctly does not. A matched-variance noise control adds the
second axis: sustained variance supplies the pressure, but the absorbing state
also requires the signal to be \emph{hackable} (policy-controllable); noise with
the same variance drags severely yet never locks (\S\ref{sec:rescue}). The two
axes are necessary in every \emph{reward-channel} lock we observe (the 8B
auxiliary-loss lock, \S\ref{sec:generalization}, arises without either and sits
outside this criterion's scope) but not jointly sufficient: the $\Delta$acc and
anchor-QA arms sustain variance on policy-controllable signals and neither
locks. What further separates the locking arms (plausibly, the existence of a
degenerate behavior that maximizes the signal at zero task cost, as a fully
predictable loop does for difference-form prediction) remains uncharacterized.
One more
$\varepsilon$-scale term shares the pool: the invalid-action penalty ($0.1$, the
same order as $\lambda$), which varies per step-sample even inside an all-fail
group. Strictly, the amplifier stretches the \emph{joint} spread of shaping and
penalty, rewarding predictability and surface compliance together, precisely the
pair the collapsed policy ends up maximizing (invalid rate $0.005$,
prediction-parse validity $1.00$, \S\ref{sec:failure}).
Proposition~\ref{prop:lambda} states both limits explicitly, and the measured
three-point sweep shows no $\lambda$ trend, consistent with either. The sweep
alone therefore cannot apportion the pooled spread between the two terms
(penalty-dominated pools are $\lambda$-invariant precisely because the shaping
is negligible there). The decomposition is \emph{implementable} as a
training-time monitor, since the two terms enter the reward separately; our
runs logged only their batch means, so we do not report the share, and the
apportioning we rely on is the causal one below. Every baseline arm carries the same penalty
under the same normalizer and the same early all-fail exposure and never
collapses, so penalty spread alone does not produce the failure; adding the
shaping term does (the knock-out matrix, \S\ref{sec:rescue}), and the terminal
behavior is prediction-optimal (accuracy driven to $1.0$), the imprint of
shaping pressure that dominated during the approach (at saturation itself the
shaping contribution has vanished, Proposition~\ref{prop:variance}).

The criterion is also consistent with the published reward-channel successes
of \S\ref{sec:related}, in the weak sense that neither success contradicts it.
RWML's embedding-space gap reward is level-scored, outside the difference-form
family whose per-step fluctuation feeds the amplifier; for level signals our
own pair shows the outcome hinges on whether within-group spread decays
(self-report) or persists (anchor-QA), a property we cannot measure in RWML
from outside, and its authors' warning that token-level next-state prediction
``can lead to model collapse'' independently marks the corner we map. VAGEN
redesigns the estimator, and our GiGPO arm shows redesign alone does not
decide (the deciding bit is whether the within-group $\sigma$ division
survives, a coordinate VAGEN's description leaves unreported). The criterion
issues no safety certificates here; it is compatible with these successes, not
confirmed by them.

\end{document}